\def\arxivmode{1}
\newcommand{\smiddle}{\mathrel{}\middle|\mathrel{}} %
\newcommand{\eps}{\epsilon}
\def\balign#1\ealign{\begin{align}#1\end{align}}
\def\baligns#1\ealigns{\begin{align*}#1\end{align*}}
\def\balignat#1\ealign{\begin{alignat}#1\end{alignat}}
\def\balignats#1\ealigns{\begin{alignat*}#1\end{alignat*}}
\def\bitemize#1\eitemize{\begin{itemize}#1\end{itemize}}
\def\benumerate#1\eenumerate{\begin{enumerate}#1\end{enumerate}}
\newenvironment{talign*}
 {\csname align*\endcsname}
 {\endalign}
\newenvironment{talign}
 {\csname align\endcsname}
 {\endalign}
\def\balignst#1\ealignst{\begin{talign*}#1\end{talign*}}
\def\balignt#1\ealignt{\begin{talign}#1\end{talign}}%
\newcommand{\qtext}[1]{\quad\text{#1}\quad} 
\let\originalleft\left
\let\originalright\right
\renewcommand{\left}{\mathopen{}\mathclose\bgroup\originalleft}
\renewcommand{\right}{\aftergroup\egroup\originalright}
\def\Holder{H\"older\xspace}
\def\tinycitep*#1{{\tiny\citep*{#1}}}
\def\tinycitealt*#1{{\tiny\citealt*{#1}}}
\def\tinycite*#1{{\tiny\cite*{#1}}}
\def\smallcitep*#1{{\scriptsize\citep*{#1}}}
\def\smallcitealt*#1{{\scriptsize\citealt*{#1}}}
\def\smallcite*#1{{\scriptsize\cite*{#1}}}
\def\mbb#1{\mathbb{#1}}
\def\tsc#1{\textsc{#1}}
\def\textsum{{\textstyle\sum}} %
\def\reals{\mathbb{R}} %
\def\<{\left\langle} %
\def\>{\right\rangle}
\def\defeq{\triangleq} %
\def\norm#1{\left\|{#1}\right\|} %
\newcommand{\onenorm}[1]{\norm{#1}_1} %
\newcommand{\twonorm}[1]{\norm{#1}_2} %
\newcommand{\infnorm}[1]{\norm{#1}_{\infty}} %
\newcommand{\opnorm}[1]{\norm{#1}_{op}} %
\newcommand{\inner}[2]{\langle{#1},{#2}\rangle} %
\def\indic#1{\mbb{I}\left[{#1}\right]} %
\def\maxarg#1{\max\left({#1}\right)} %
\def\E{\mbb{E}} %
\def\Earg#1{\E\left[{#1}\right]}
\def\Esubarg#1#2{\E_{#1}\left[{#2}\right]}
\newcommand{\Gsn}{\mathcal{N}}
\newcommand{\Unif}{\textnormal{Unif}}
\newcommand{\grad}{\nabla} %
\newcommand{\Hess}{\nabla^2} %
\newcommand{\deriv}[2]{\frac{d #1}{d #2}} %
\newcommand{\pderiv}[2]{\frac{\partial #1}{\partial #2}} %
\providecommand{\sign}{\mathop\mathrm{sign}}
\def\supp#1{\mathrm{supp}({#1})}
\newcommand{\algref}[1]{Algorithm~{\ref{alg:#1}}}
\newcommand{\appref}[1]{Appendix~{\ref{sec:#1}}}
\newcommand{\eqnref}[1]{\eqref{eqn:#1}}
\newcommand{\figref}[1]{Figure~{\ref{fig:#1}}}
\newcommand{\lemref}[1]{Lemma~{\ref{lem:#1}}}
\newcommand{\secref}[1]{Section~{\ref{sec:#1}}}
\newcommand{\secsref}[1]{Sections~{\ref{sec:#1}}}
\newcommand{\secssref}[1]{{\ref{sec:#1}}}
\newcommand{\propref}[1]{Proposition~{\ref{prop:#1}}}
\newcommand{\thmref}[1]{Theorem~{\ref{thm:#1}}}
\newtheorem{theorem}{Theorem}
\newtheorem{lemma}[theorem]{Lemma}
\renewenvironment{proof}{\noindent\textbf{Proof}\hspace*{1em}}{\qed\\}
\newenvironment{proof-sketch}{\noindent\textbf{Proof Sketch}
  \hspace*{1em}}{\qed\bigskip\\}
\newenvironment{proof-idea}{\noindent\textbf{Proof Idea}
  \hspace*{1em}}{\qed\bigskip\\}
\newenvironment{proof-of-lemma}[1][{}]{\noindent\textbf{Proof of Lemma {#1}}
  \hspace*{1em}}{\qed\\}
\newenvironment{proof-of-theorem}[1][{}]{\noindent\textbf{Proof of Theorem {#1}}
  \hspace*{1em}}{\qed\\}
\newenvironment{proof-attempt}{\noindent\textbf{Proof Attempt}
  \hspace*{1em}}{\qed\bigskip\\}
\newtheorem{proposition}[theorem]{Proposition}
\newcommand{\xset}[0]{\mathcal{X}} %
\newcommand{\gset}[0]{\mathcal{G}} %
\newcommand{\steinset}[0]{\steinsetarg{}} %
\newcommand{\steinsetarg}[1]{\gset_{\norm{\cdot}_{#1}}} %
\newcommand{\gsteinset}[2]{\gset_{\norm{\cdot}_{#1},Q,{#2}}} %
\newcommand{\hset}[0]{\mathcal{H}} %
\newcommand{\wasssetarg}[1]{\mathcal{W}_{#1}} %
\newcommand{\wassset}{\wasssetarg{\norm{\cdot}}} %
\newcommand{\smoothset}{\mathcal{M}_{\norm{\cdot}}} %
\newcommand{\ball}{\mathcal{B}} %
\newcommand{\generator}[1]{\mathcal{A}{#1}} %
\newcommand{\genarg}[2]{(\generator{#1})({#2})} %
\newcommand{\operator}[1]{\mathcal{T}{#1}} %
\newcommand{\oparg}[2]{(\operator{#1})({#2})} %
\newcommand{\langevin}[1]{\mathcal{T}_P{#1}} %
\newcommand{\langarg}[2]{(\langevin{#1})({#2})} %
\newcommand{\langgenarg}[2]{(\mathcal{A}_P{#1})({#2})} %
\newcommand{\ipm}{d_\hset} %
\newcommand{\stein}[3]{\mathcal{S}({#1},{#2},{#3})} %
\newcommand{\opstein}[2]{\stein{#1}{\operator{}}{#2}} %
\newcommand{\langstein}[2]{\stein{#1}{\langevin{}}{#2}} %
\newcommand{\wass}{d_{\wassset}} %
\newcommand{\smooth}{d_{\smoothset}} %
\newcommand{\pvar}[0]{z}%
\newcommand{\PVAR}[0]{\MakeUppercase{\pvar}} %
\newcommand{\process}[2]{\PVAR_{#1,#2}} %
\title{Measuring Sample Quality with Stein's Method}
\author{
Jackson Gorham\\
Department of Statistics\\
Stanford University\\
\And
Lester Mackey\\
Department of Statistics\\
Stanford University\\
}
\begin{document} 

\captionsetup{belowskip=0pt,aboveskip=4pt} %
\floatsep=\baselineskip %
\textfloatsep=\baselineskip %

\maketitle

\graphicspath{{./figs/}}

\begin{abstract}
To improve the efficiency of Monte Carlo estimation, practitioners are turning to biased Markov chain Monte Carlo procedures that trade off asymptotic exactness for computational speed.
The reasoning is sound: a reduction in variance due to more rapid sampling can outweigh the bias introduced.
However, the inexactness creates new challenges for sampler and parameter selection, since standard measures of sample quality like effective sample size do not account for asymptotic bias.
To address these challenges, we introduce a new computable quality measure based on Stein's method that quantifies the maximum discrepancy between sample and target expectations over a large class of test functions.
We use our tool to compare exact, biased, and deterministic sample sequences and illustrate applications to hyperparameter selection, convergence rate assessment, and quantifying bias-variance tradeoffs in posterior inference.
\end{abstract}

\section{Introduction}

When faced with a complex target distribution, one often turns to Markov chain Monte Carlo (MCMC) \citep{BrooksGeJoMe11} to approximate intractable expectations $\Esubarg{P}{h(Z)} = \int_{\xset} p(x)h(x) dx$ 
with asymptotically exact sample estimates $\Esubarg{Q}{h(X)} = \sum_{i=1}^n q(x_i)h(x_i)$.
These complex targets commonly arise as posterior distributions in Bayesian inference 
and as candidate distributions in maximum likelihood estimation \citep{Geyer91}.
In recent years, researchers \cite[e.g.,][]{WellingTe11,Ahn2012,Korattikara2014} have introduced asymptotic bias into MCMC procedures to trade off asymptotic correctness for improved sampling speed.
The rationale is that more rapid sampling can reduce the variance of a Monte Carlo estimate and hence outweigh the bias introduced.
However, the added flexibility introduces new challenges for sampler and parameter selection, since standard sample quality measures, like effective sample size, asymptotic variance, trace and mean plots, and 
pooled and within-chain variance diagnostics, presume eventual convergence to the target~\cite{BrooksGeJoMe11} and hence do not account for asymptotic bias.

To address this shortcoming, we develop a new measure of sample quality suitable for comparing asymptotically exact, asymptotically biased, and even deterministic sample sequences.
The quality measure is based on Stein's method and is attainable by solving a linear program.
After outlining our design criteria in \secref{quality}, we relate the convergence of the quality measure to that of standard probability metrics in \secref{steins-method}, develop a streamlined implementation based on geometric spanners in \secref{programs}, 
and illustrate applications to hyperparameter selection, convergence rate assessment, and the quantification of bias-variance tradeoffs in posterior inference in \secref{experiments}. 
We discuss related work in \secref{conclusions} and defer all proofs to the appendix.

\textbf{Notation}\quad 
We denote the $\ell_2$, $\ell_1$, and $\ell_\infty$ norms on $\reals^d$ by $\norm{\cdot}_2$, $\norm{\cdot}_1$, and $\norm{\cdot}_\infty$ respectively.
We will often refer to a generic norm $\norm{\cdot}$ on $\reals^d$ with associated dual norms $\norm{w}^*  \defeq \sup_{v\in \reals^d: \norm{v}=1}{\inner{w}{v}}$ for vectors $w\in\reals^d$,
$\norm{M}^* \defeq \sup_{v\in \reals^d: \norm{v}=1}{\norm{Mv}^*}$ for matrices $M\in \reals^{d \times d}$, 
and  $\norm{T}^* \defeq \sup_{v\in \reals^d: \norm{v}=1}{\norm{T[v]}^*}$ for tensors $T \in \reals^{d\times d\times d}$.
We denote the $j$-th standard basis vector by $e_j$, the partial derivative $\pderiv{}{x_k}$ by $\grad_k$,
 and the gradient of any $\reals^d$-valued function $g$ by $\grad g$ with components $(\grad g(x))_{jk} \defeq \grad_k g_j(x)$. %

\section{Quality Measures for Samples}
\label{sec:quality}

Consider a target distribution $P$ with convex support $\xset \subseteq \reals^d$ %
and continuously differentiable density $p$.
We assume that $p$ is known up to its normalizing constant and that exact integration under $P$ is intractable for most functions of interest.
We will approximate expectations under $P$ with the aid of a \emph{weighted sample}, a collection of distinct sample points $x_1, \dots, x_n \in \xset$ with weights $q(x_i)$ encoded in a probability mass function $q$.
The probability mass function $q$ induces a discrete distribution $Q$ and an approximation $\Esubarg{Q}{h(X)} = \sum_{i=1}^n q(x_i)h(x_i)$ for any target expectation $\Esubarg{P}{h(\PVAR)}$.
We make no assumption about the provenance of the sample points; they may arise as random draws from a Markov chain or even be deterministically selected.

Our goal is to compare the fidelity of different samples approximating a common target distribution.
That is, we seek to quantify the discrepancy between $\E_{Q}$ and $\E_P$ in a manner that (i) detects when a sequence of samples is converging to the target, (ii) detects when a sequence of samples is not converging to the target, and (iii) is computationally feasible.
We begin by considering
the maximum deviation between sample and target expectations over a class of real-valued test functions $\hset$,
\balign\label{eqn:ipm}
	\ipm(Q,P) = \sup_{h \in \hset} |\Esubarg{Q}{h(X)} - \Esubarg{P}{h(\PVAR)}|.
\ealign
When the class of test functions is sufficiently large, the convergence of $\ipm(Q_m, P)$ to zero implies that the sequence of sample measures $(Q_m)_{m\geq 1}$ converges weakly to $P$.
In this case, the expression \eqref{eqn:ipm} is termed an \emph{integral probability metric} (IPM) \cite{Muller97}. 
By varying the class of test functions $\hset$, we can recover many well-known probability metrics as IPMs, including
the \emph{total variation distance}, generated by $\hset = \{h: \xset\to\reals \mid \sup_{x\in\xset} |h(x)| \leq 1\}$, 
and the \emph{Wasserstein distance} (also known as the Kantorovich-Rubenstein or earth mover's distance), $\wass$, generated by 
\[\textstyle\hset = \wassset\defeq\{h : \xset\to\reals \mid \sup_{x\neq y\in\xset} \frac{|h(x) - h(y)|}{\norm{x-y}} \leq 1 \}.\]
The primary impediment to adopting an IPM as a sample quality measure is that exact computation is typically infeasible when generic integration under $P$ is intractable.
However, we could skirt this intractability by focusing on classes of test functions with known expectation under $P$.
For example, if we consider only test functions $h$ for which $\Esubarg{P}{h(\PVAR)} = 0$, then the IPM value $\ipm(Q,P)$ is the solution of an optimization problem depending on $Q$ alone.
This, at a high level, is our strategy, but many questions remain. 
How do we select the class of test functions $h$?
How do we know that the resulting IPM will track convergence and non-convergence of a sample sequence (Desiderata (i) and (ii))?
How do we solve the resulting optimization problem in practice (Desideratum (iii))?
To address the first two of these questions, we draw upon tools from Charles Stein's method of characterizing distributional convergence.
We return to the third question in \secref{programs}.

\section{Stein's Method}
\label{sec:steins-method}

Stein's method \citep{Stein72} for characterizing convergence in distribution classically proceeds in three steps:
\benumerate
	\item 
	Identify a real-valued operator $\operator{}$ acting on a set $\gset$ of 
	$\reals^d$-valued\footnote{Scalar functions $g$ are more common in Stein's method, but we will find $\reals^d$-valued $g$ more convenient.}
	functions of $\xset$ for which
	\balign\label{eqn:kernel-condition}
		\Esubarg{P}{\oparg{g}{\PVAR}} = 0 \qtext{for all} g\in \gset.
	\ealign
	Together, $\operator{}$ and $\gset$ define the \emph{Stein discrepancy}, 
	\baligns
		\opstein{Q}{\gset}
			\defeq \sup_{g\in\gset} |\Esubarg{Q}{\oparg{g}{X}}|
			= \sup_{g\in\gset} |\Esubarg{Q}{\oparg{g}{X}} - \Esubarg{P}{\oparg{g}{\PVAR}}|
			= d_{\operator{}\gset}(Q,P),
	\ealigns
	an IPM-type quality measure with no explicit integration under $P$.
	\item 
	Lower bound the Stein discrepancy by a familiar convergence-determining IPM $\ipm$.
	This step can be performed once, in advance, for large classes of target distributions and 
	ensures that, for any sequence of probability measures $(\mu_m)_{m\geq 1}$, $\opstein{\mu_m}{\gset}$ converges to zero only if $\ipm(\mu_m, P)$ does (Desideratum (ii)).
	\item
	 Upper bound the Stein discrepancy by any means necessary to demonstrate convergence to zero under suitable conditions (Desideratum (i)).
	In our case, the universal bound established in \secref{discrepancy-upper-bound} will suffice.
\eenumerate
While Stein's method is typically employed as an analytical tool, we view the Stein discrepancy as a promising candidate for a practical sample quality measure.
Indeed, in \secref{programs}, we will adopt an optimization perspective and develop efficient procedures 
to compute the Stein discrepancy for any sample measure $Q$ and appropriate choices of $\operator{}$ and $\gset$. 
First, we assess the convergence properties of an equivalent Stein discrepancy in the subsections to follow.

\subsection{Identifying a Stein Operator}
The \emph{generator method} of \citet{Barbour88} provides a convenient and general means of constructing operators $\operator{}$ which produce mean-zero functions under $P$ \eqref{eqn:kernel-condition} .
Let $(\PVAR_t)_{t\geq 0}$ represent a Markov process with unique stationary distribution $P$.
Then the \emph{infinitesimal generator} $\generator{}$ of $(\PVAR_t)_{t\geq 0}$, defined by 
\[
	\genarg{u}{x} = \lim_{t\to0} {(\Earg{u(\PVAR_t) \mid \PVAR_0 = x} - u(x))/}{t} \qtext{for} u: \reals^d \to \reals,
\]
satisfies $\Esubarg{P}{\genarg{u}{\PVAR}} = 0$ under mild conditions on $\generator{}$ and $u$.
Hence, a candidate operator $\operator{}$ can be constructed from any infinitesimal generator.

For example, the \emph{overdamped Langevin diffusion}, defined by the stochastic differential equation
$
	d\PVAR_t = \frac{1}{2}\grad\log p(\PVAR_t) dt + dW_t
$
for $(W_t)_{t\geq 0}$ a Wiener process,
gives rise to the generator
\balign\label{eqn:generator}
\langgenarg{u}{x} = \frac{1}{2}\inner{\grad u(x)}{\grad\log p(x)} + \frac{1}{2}\inner{\grad}{\grad u(x)}.
\ealign
After substituting $g$ for $\frac{1}{2}\grad u$, we obtain the associated \emph{Stein operator}\footnote{The operator $\langevin{}$ has also found fruitful application in the design of Monte Carlo control variates \cite{OatesGiCh14}.}
\balign\label{eqn:generator-op}
\langarg{g}{x} \defeq \inner{g(x)}{\grad\log p(x)} + \inner{\grad}{g(x)}.
\ealign
The Stein operator $\langevin{}$ is particularly well-suited to our setting as it depends on $P$ only through the derivative of its log density and hence is computable even when the normalizing constant of $p$ is not.

If we let $\partial\xset$ denote the boundary of $\xset$ (an empty set when $\xset=\reals^d$) and $n(x)$ represent the outward unit normal vector to the boundary at $x$, then we may define
the \emph{classical Stein set}
\baligns
	\steinset 
		\defeq \bigg\{ g :\xset\to\reals^d \bigg|
		&\sup_{x\neq y\in \xset} \maxarg{\norm{g(x)}^*,\norm{\grad g(x)}^*, \frac{\norm{\grad g(x) - \grad g(y)}^*}{\norm{x-y}}} \leq 1\qtext{and}\\
		&\inner{g(x)}{n(x)} = 0, \forall x \in \partial\xset \text{ with $n(x)$ defined}
		\bigg\}
\ealigns
of sufficiently smooth functions satisfying a Neumann-type boundary condition.
The following proposition -- a consequence of integration by parts -- shows that $\steinset$ is a suitable domain for $\langevin{}$.
\begin{proposition} \label{prop:langevin-zero}
	If $\Esubarg{P}{\norm{\grad \log p(\PVAR)}}<\infty$, then
	$\Esubarg{P}{\langarg{g}{\PVAR}} = 0$ for all $g\in \steinset$.
\end{proposition}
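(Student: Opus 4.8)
The plan is to recognize $p\cdot\langevin g$ as an exact divergence and then invoke the divergence theorem, letting the Neumann condition baked into $\steinset$ absorb the boundary contribution. Writing $\grad\log p = \grad p/p$, the first term of $\langarg{g}{x}$ becomes $p(x)\inner{g(x)}{\grad\log p(x)} = \inner{g(x)}{\grad p(x)}$, so the product rule for the divergence gives
\[
 p(x)\,\langarg{g}{x} = \inner{g(x)}{\grad p(x)} + p(x)\inner{\grad}{g(x)} = \inner{\grad}{p(x)g(x)}.
\]
Hence $\Esubarg{P}{\langarg{g}{\PVAR}} = \int_\xset \inner{\grad}{p(x)g(x)}\dx$, and it suffices to show that this integral of a pure divergence vanishes.

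Before invoking any integration-by-parts identity I would first verify integrability, so that $\Esubarg{P}{\langarg{g}{\PVAR}}$ is well defined. Since $g\in\steinset$ forces $\norm{g(x)}^*\le 1$ and $\norm{\grad g(x)}^*\le 1$ pointwise, the dual-norm inequality $\inner{w}{v}\le\norm{w}^*\norm{v}$ yields $|\inner{g(x)}{\grad\log p(x)}|\le\norm{\grad\log p(x)}$, which is $P$-integrable exactly by the hypothesis $\Esubarg{P}{\norm{\grad\log p(\PVAR)}}<\infty$; and $|\inner{\grad}{g(x)}| = |\Tr(\grad g(x))|$ is bounded by a dimensional constant times $\norm{\grad g(x)}^*\le1$ by equivalence of norms on $\reals^{d\times d}$. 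In particular the vector field $F\defeq pg$ satisfies $\norm{F(x)}^* = p(x)\norm{g(x)}^*\le p(x)$, so $F\in L^1(\xset)$, and its divergence $\inner{\grad}{F}=p\cdot\langevin g\in L^1(\xset)$ as well.

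To evaluate the integral I would exhaust $\xset$ by the bounded convex sets $\xset_R\defeq\xset\cap\{x:\twonorm{x}\le R\}$. Because $\xset$ is open and convex, its boundary is Lipschitz, so the divergence theorem applies on each $\xset_R$ (the $C^1$-with-Lipschitz-gradient regularity of $F$ suffices, after a routine mollification if one wants to be fully rigorous), giving $\int_{\xset_R}\inner{\grad}{F}\dx = \int_{\partial\xset_R}\inner{F(x)}{n(x)}\,dS(x)$. The boundary $\partial\xset_R$ splits into a piece contained in $\partial\xset$, on which $\inner{F}{n}=p\inner{g}{n}=0$ by the Neumann condition in $\steinset$, and a spherical piece $\xset\cap\{\twonorm{x}=R\}$. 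The left-hand side converges to $\int_\xset\inner{\grad}{F}\dx$ by dominated convergence, so the whole claim reduces to showing that the spherical boundary term vanishes along some sequence $R\to\infty$.

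Controlling that spherical term is the crux, and here I would exploit $F\in L^1(\xset)$: by polar integration (the coarea formula), $\int_0^\infty\left(\int_{\xset\cap\{\twonorm{x}=R\}}\norm{F}^*\,dS\right)\,dR = \int_\xset\norm{F}^*\dx<\infty$, so the inner surface integral is finite for a.e.\ $R$ and has $\liminf_{R\to\infty}$ equal to $0$. Choosing $R_k\to\infty$ realizing this liminf and bounding $|\inner{F}{n}|\le\norm{F}^*\norm{n}\le C\norm{F}^*$ (norm equivalence again, since $n(x)$ is a Euclidean unit vector) forces the spherical term to $0$ along $R_k$, whence $\int_\xset\inner{\grad}{F}\dx=0$. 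I expect this decay-at-infinity step to be the main obstacle: mere $L^1$ control of the divergence does not by itself guarantee that a pure-divergence integral vanishes, and it is the $L^1$-ness of $F$ itself (from $\norm{g}^*\le1$ together with $p$ being a density) that makes the coarea/liminf argument go through. The remaining fiddly points are the unbounded-domain-with-boundary bookkeeping and the repeated passage between the generic norm $\norm{\cdot}$ and the Euclidean geometry underlying $n(x)$ and the exhausting balls.
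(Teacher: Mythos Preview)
Your proof is correct and follows essentially the same divergence-theorem-plus-exhaustion strategy as the paper (which exhausts by $\ell_\infty$ cubes rather than Euclidean balls). The only notable difference is that the paper dispatches the boundary-at-infinity term by asserting $g$ bounded and $p(x)\to 0$ as $\norm{x}_\infty\to\infty$, whereas your coarea/liminf argument from $pg\in L^1(\xset)$ is a tighter and more self-contained justification for the same step.
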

Together, $\langevin{}$ and $\steinset$ form the \emph{classical Stein discrepancy} $\langstein{Q}{\steinset}$, our chief object of study.

\subsection{Lower Bounding the Classical Stein Discrepancy}
In the univariate setting ($d=1$), it is known for a wide variety of targets $P$ that
the classical Stein discrepancy $\langstein{\mu_m}{\steinset}$ converges to zero only if the Wasserstein distance $\wass(\mu_m,P)$ does \citep{ChenGoSh11,ChatterjeeSh11}.
In the multivariate setting, analogous statements are available for multivariate Gaussian targets \citep{ReinertRo09,ChatterjeeMe08,Meckes09}, but few other target distributions have been analyzed.
To extend the reach of the multivariate literature, we show in \thmref{concave-stein-lower-bound} that the classical Stein discrepancy also determines Wasserstein convergence for a large class of strongly log-concave densities, including the Bayesian logistic regression posterior under Gaussian priors.

\begin{theorem}[Stein Discrepancy Lower Bound for Strongly Log-concave Densities] \label{thm:concave-stein-lower-bound}
	If $\xset = \reals^d$, and $\log p$ is strongly concave with third and fourth derivatives bounded and continuous, 
	then, for any probability measures $(\mu_m)_{m\geq 1}$, $\langstein{\mu_m}{\steinset}\to 0$ only if $\wass(\mu_m,P)\to 0$.
\end{theorem}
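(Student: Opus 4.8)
The plan is to split the desired implication using the standard fact that Wasserstein-1 convergence is equivalent to weak convergence together with convergence of first moments. Accordingly, I would show that $\langstein{\mu_m}{\steinset}\to 0$ forces (A) $\mu_m$ to converge weakly to $P$ and (B) the first moments to be uniformly integrable; the two together upgrade weak convergence to $\wass(\mu_m,P)\to0$. Throughout, let $k>0$ be the strong-concavity constant ($-\Hess\log p\psdge k\ident$), let $\PVAR_t^x$ denote the overdamped Langevin diffusion with generator \eqref{eqn:generator} started at $\PVAR_0=x$, and note that since $\xset=\reals^d$ the Neumann condition defining $\steinset$ is vacuous.

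For part (A) I would solve the Stein equation $\langarg{g_h}{x}=h(x)-\Esubarg{P}{h(\PVAR)}$ only for smooth test functions $h$, which already form a convergence-determining class. Because $\langevin{}$ is twice the generator, the Poisson solution is $u_h(x)=-\int_0^\infty\big(\E[h(\PVAR_t^x)]-\Esubarg{P}{h(\PVAR)}\big)\dt$ with $g_h\defeq\tfrac12\grad u_h$, and the task is to bound the three functionals $\sup_x\norm{g_h(x)}^*$, $\sup_x\norm{\grad g_h(x)}^*$, and the Lipschitz constant of $\grad g_h$ by a finite $c_h$. These reduce to bounds on $\grad^j\E[h(\PVAR_t^x)]$ for $j=1,2,3$, which I would obtain from the first three variation processes $J_t\defeq\grad_x\PVAR_t^x$, $H_t\defeq\grad^2_x\PVAR_t^x$, and $K_t\defeq\grad^3_x\PVAR_t^x$. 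Strong concavity makes the drift a contraction, so a synchronous coupling gives $\staticnorm{J_t}\leq e^{-kt/2}$; the processes $H_t,K_t$ then satisfy linear ODEs forced by $\grad^3\log p$ and $\grad^4\log p$ and damped by $J_t$, so their assumed boundedness combined with the contraction yields $\staticnorm{H_t},\staticnorm{K_t}\lesssim e^{-kt/2}$ by \Gronwall. Crucially, because $h$ is smooth I can differentiate it directly rather than through the diffusion's noise, so each $\grad^j\E[h(\PVAR_t^x)]$ decays like $e^{-ct}$ with no small-time singularity, the defining integrals converge, and $c_h<\infty$. Hence $|\Esubarg{\mu_m}{h(X)}-\Esubarg{P}{h(\PVAR)}|\leq c_h\,\langstein{\mu_m}{\steinset}\to0$, giving weak convergence once tightness is in hand.

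Tightness and the uniform integrability of part (B) I would extract from the Stein discrepancy by testing it against carefully chosen bounded vector fields. Taking $g$ that is bounded with bounded, Lipschitz gradient and aligned with $x$ for large $\norm{x}$, strong log-concavity makes $\inner{g(x)}{\grad\log p(x)}$ of order $-k\norm{x}$ there, so the inequality $|\Esubarg{\mu_m}{\langarg{g}{X}}|\leq c\,\langstein{\mu_m}{\steinset}$ converts into a bound of the form $\Esubarg{\mu_m}{\norm{\PVAR}}\lesssim\tfrac1k(\langstein{\mu_m}{\steinset}+O(1))$, uniform in $m$; localizing $g$ to the region $\norm{x}>R$ similarly controls the tail contributions $\Esubarg{\mu_m}{\norm{\PVAR}\indic{\norm{\PVAR}>R}}$ uniformly in $m$, which is exactly uniform integrability. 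Combining (A) and (B) yields $\wass(\mu_m,P)\to0$.

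I expect the main obstacles to be two intertwined estimates. The first is the regularity half of part (A): showing that the second and third variation processes remain exponentially bounded, which requires the \Gronwall argument to balance the contraction from strong concavity against the forcing by $\grad^3\log p$ and $\grad^4\log p$, and is precisely where continuity and boundedness of those derivatives enter. The second, and more delicate, is the tail estimate in part (B): because admissible $g\in\steinset$ must be bounded, a single test field can only reach the first moment, so securing genuine uniform integrability — enough to turn weak convergence into Wasserstein convergence rather than merely tightness — demands a careful family of localized vector fields whose divergences $\inner{\grad}{g(x)}$ do not overwhelm the favorable drift term as $R\to\infty$.
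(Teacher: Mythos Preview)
Your Part (A) is exactly what the paper does: it invokes Stein factors for strongly log-concave targets (\thmref{concave-stein-factors} from the companion paper~\citep{MackeyGo15}), obtained via the same variation-process/\Gronwall mechanism you describe, to conclude that $\smooth(\mu,P)\le \max(c_1,c_2,c_3)\,\langstein{\mu}{\steinset}$ for the smooth IPM $\smooth$ generated by $h$ with bounded first three derivatives.

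Where you diverge is in the passage from ``smooth-function convergence'' to Wasserstein convergence. The paper does \emph{not} argue via weak convergence plus uniform integrability; instead it applies a direct \emph{Smooth--Wasserstein inequality} (\lemref{smooth-wass}),
\[
\wass(\mu,\nu)\ \le\ 3\max\!\Big(\smooth(\mu,\nu),\ \big(\smooth(\mu,\nu)\,\sqrt{2}\,\Earg{\norm{G}}^{2}\big)^{1/3}\Big),
\]
proved by mollifying an arbitrary $1$-Lipschitz $h$ with a Gaussian at a tunable scale. This is target-agnostic (strong log-concavity is used only once, for the Stein factors) and quantitative, yielding an explicit rate $\wass\lesssim \langstein{\cdot}{\steinset}^{1/3}$. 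Your route is also correct---the localized vector fields $g_R(x)=\phi_R(\norm{x})\,x/\norm{x}$ can indeed be scaled into $\steinset$ with constants independent of $R$, and combining the resulting tail bound with the uniform first-moment bound and Markov's inequality does give uniform integrability---but it uses strong log-concavity a second time, is purely qualitative (no rate), and the UI step requires the two-stage argument (first bound $\sup_m\Esubarg{\mu_m}{\norm{X}}$, then feed Markov back into the localized estimate) that you flag as delicate. The paper's mollification lemma sidesteps all of that in one line.
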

We emphasize that the sufficient conditions in \thmref{concave-stein-lower-bound} are certainly not necessary for lower bounding the classical Stein discrepancy.
We hope that the theorem and its proof will provide a template for lower bounding $\langstein{Q}{\steinset}$ for other large classes of multivariate target distributions.

\subsection{Upper Bounding the Classical Stein Discrepancy} \label{sec:discrepancy-upper-bound}
We next  establish sufficient conditions for the convergence of the classical Stein discrepancy to zero.
\begin{proposition}[Stein Discrepancy Upper Bound] \label{prop:stein-upper-bound}
If $X \sim Q$ and $\PVAR \sim P$ with $\grad\log p(\PVAR)$ integrable,
\baligns
	&\langstein{Q}{\steinset}
		\leq \norm{I}\,\Earg{\norm{X-Z}} + \Earg{\norm{\grad\log p(X)-\grad\log p(\PVAR)}} + \Earg{\norm{\grad \log p(Z)(X-Z)^\top}} \\
		&\leq\ \norm{I}\,\Earg{\norm{X-Z}}+ \Earg{\norm{\grad\log p(X)-\grad\log p(\PVAR)}}\ +
		\textstyle\sqrt{\Earg{{\norm{\grad \log p(Z)}}^2}\Earg{\norm{X-Z}^2}}.
\ealigns
\end{proposition}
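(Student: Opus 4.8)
The plan is to turn the mean-zero identity of \propref{langevin-zero} into a coupling argument and then control the resulting increment of $\langevin{}$ term by term using the three defining constraints of $\steinset$ (the bounds $\norm{g}^*\le1$, $\norm{\grad g}^*\le1$, and the unit Lipschitz constant of $\grad g$). Fix any $g\in\steinset$ and any joint law of $X\sim Q$ and $\PVAR\sim P$. Since $\grad\log p(\PVAR)$ is integrable, \propref{langevin-zero} gives $\Esubarg{P}{\langarg{g}{\PVAR}}=0$, so $\Esubarg{Q}{\langarg{g}{X}}=\Earg{\langarg{g}{X}-\langarg{g}{\PVAR}}$. Adding and subtracting $\inner{g(X)}{\grad\log p(\PVAR)}$ in the drift term, I would split the increment into three telescoping pieces,
\[
\begin{aligned}
\langarg{g}{X}-\langarg{g}{\PVAR}
&= \inner{g(X)}{\grad\log p(X)-\grad\log p(\PVAR)} + \inner{g(X)-g(\PVAR)}{\grad\log p(\PVAR)}\\
&\quad+ \inner{\grad}{g(X)}-\inner{\grad}{g(\PVAR)},
\end{aligned}
\]
each of which is designed to produce one of the three summands in the bound.

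Taking absolute values and using the triangle inequality, I bound each piece uniformly in $g$. For the drift piece, $\norm{g(X)}^*\le1$ and the generalized Cauchy--Schwarz inequality give $|\inner{g(X)}{\grad\log p(X)-\grad\log p(\PVAR)}|\le\norm{\grad\log p(X)-\grad\log p(\PVAR)}$, the second summand. For the divergence piece I would write $\inner{\grad}{g}=\Tr(\grad g)$ and pair against the identity, $\Tr(A)=\Tr(I^\top A)$, so that with the trace duality $\Tr(I^\top A)\le\norm{I}\,\norm{A}^*$ (here $\norm{\cdot}$ on matrices is the norm predual to $\norm{\cdot}^*$) and the Lipschitz-gradient constraint $\norm{\grad g(X)-\grad g(\PVAR)}^*\le\norm{X-\PVAR}$ one gets $|\inner{\grad}{g(X)}-\inner{\grad}{g(\PVAR)}|\le\norm{I}\,\norm{X-\PVAR}$, the first summand. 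For the middle piece I would invoke the fundamental theorem of calculus along the segment from $\PVAR$ to $X$ (which lies in $\xset$ by convexity), $g(X)-g(\PVAR)=\int_0^1\grad g(\PVAR+t(X-\PVAR))(X-\PVAR)\dt$, then rewrite the vector pairing as a matrix pairing via $\inner{Mv}{w}=\Tr(M^\top wv^\top)$ and apply $\norm{\grad g}^*\le1$ to obtain $|\inner{g(X)-g(\PVAR)}{\grad\log p(\PVAR)}|\le\norm{\grad\log p(\PVAR)(X-\PVAR)^\top}$, the third summand. Summing the three bounds, taking expectations, and taking the supremum over $g\in\steinset$ yields the first inequality; since the bound holds for every coupling it holds for the infimum over couplings.

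The second inequality only relaxes the rank-one term: the outer-product estimate $\norm{wv^\top}\le\norm{w}\,\norm{v}$ (valid in these dual norms) gives $\norm{\grad\log p(\PVAR)(X-\PVAR)^\top}\le\norm{\grad\log p(\PVAR)}\,\norm{X-\PVAR}$, and Cauchy--Schwarz over the coupling then produces $\Earg{\norm{\grad\log p(\PVAR)(X-\PVAR)^\top}}\le\sqrt{\Earg{\norm{\grad\log p(\PVAR)}^2}\Earg{\norm{X-\PVAR}^2}}$. The main obstacle is purely bookkeeping about dualities: one must correctly read the matrix norm $\norm{\cdot}$ (applied to $I$ and to the outer product) as the predual of $\norm{\cdot}^*$ so that the pairing bound $\Tr(M^\top N)\le\norm{M}\,\norm{N}^*$ is exactly what converts each constraint of $\steinset$ into the claimed summand, and verify the outer-product norm estimate in the same norms. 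No extra regularity is needed, since the $C^{1,1}$ smoothness furnished by the Lipschitz-gradient constraint justifies the line integral used for the middle piece.
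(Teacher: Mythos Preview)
Your proposal is correct and follows essentially the same route as the paper: the same telescoping decomposition of $\langarg{g}{X}-\langarg{g}{\PVAR}$, the same three termwise bounds obtained from the constraints $\norm{g}^*\le 1$, $\norm{\grad g}^*\le 1$, and the Lipschitz gradient, and the same Fenchel--Young plus Cauchy--Schwarz argument for the rank-one relaxation. The paper is terser about the duality bookkeeping (it simply asserts the three bounds and then unpacks only the second inequality), whereas you spell out the trace pairing and the line-integral representation of $g(X)-g(\PVAR)$ explicitly; these are the same steps made visible.
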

One implication of \propref{stein-upper-bound} is that $\langstein{Q_m}{\steinset}$ converges to zero whenever $X_m \sim Q_m$
converges in mean-square to $Z\sim P$ and $\grad \log p(X_m)$ converges in mean to $\grad \log p(Z)$.

\subsection{Extension to Non-uniform Stein Sets}
The analyses and algorithms in this paper readily accommodate non-uniform Stein sets of the form
\balign\label{eqn:non-uniform-stein-set}
	\steinset^{c_{1:3}} 
		\defeq \bigg\{ g :\xset\to\reals^d \bigg|
		\begin{array}{l}
		\sup_{x\neq y\in \xset} \maxarg{\frac{\norm{g(x)}^*}{c_1},\frac{\norm{\grad g(x)}^*}{c_2}, \frac{\norm{\grad g(x) - \grad g(y)}^*}{c_3\norm{x-y}}} \leq 1\,\text{ and}\\
		\inner{g(x)}{n(x)} = 0, \forall x \in \partial\xset \text{ with $n(x)$ defined}
		\end{array}
		\bigg\}
\ealign
for constants $c_1,c_2,c_3 > 0$ known as \emph{Stein factors} in the literature.
We will exploit this additional flexibility in \secref{comparing-discrepancies} to establish tight lower-bounding relations between the Stein discrepancy and Wasserstein distance for well-studied target distributions.
For general use, however, we advocate the parameter-free classical Stein set and graph Stein sets to be introduced in the sequel.
Indeed, any non-uniform Stein discrepancy is equivalent to the classical Stein discrepancy in a strong sense:
\begin{proposition}[Equivalence of Non-uniform Stein Discrepancies] \label{prop:stein-equiv}
For any $c_1,c_2, c_3 > 0$, 
\baligns
\min(c_1,c_2,c_3)\langstein{Q}{\steinset} \leq \langstein{Q}{\steinset^{c_{1:3}}} \leq \max(c_1,c_2,c_3)\langstein{Q}{\steinset}.
\ealigns
\end{proposition}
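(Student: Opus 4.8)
The plan is to exploit two structural facts: the Stein operator $\langevin{}$ is linear in its argument, and both Stein sets are cut out by a uniform bound on the \emph{same three} seminorm-type quantities together with the \emph{identical} boundary condition $\inner{g(x)}{n(x)}=0$, the only difference being the scale factors $c_1,c_2,c_3$. Linearity gives $\langarg{\lambda g}{x}=\lambda\,\langarg{g}{x}$ for every scalar $\lambda>0$, so that $|\Esubarg{Q}{\langarg{\lambda g}{X}}|=\lambda\,|\Esubarg{Q}{\langarg{g}{X}}|$; and since the boundary condition is homogeneous of degree one in $g$, it is preserved under scalar multiplication. Thus the whole claim reduces to comparing the two constraint sets up to a single uniform rescaling of $g$.

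For the left inequality I would establish the inclusion $m\,g\in\steinset^{c_{1:3}}$ for every $g\in\steinset$, where $m=\min(c_1,c_2,c_3)$. Evaluating the three normalized quantities defining $\steinset^{c_{1:3}}$ at $mg$ gives $\tfrac{m}{c_1}\norm{g(x)}^*$, $\tfrac{m}{c_2}\norm{\grad g(x)}^*$, and $\tfrac{m}{c_3}\tfrac{\norm{\grad g(x)-\grad g(y)}^*}{\norm{x-y}}$; membership $g\in\steinset$ bounds each of $\norm{g(x)}^*$, $\norm{\grad g(x)}^*$, and $\tfrac{\norm{\grad g(x)-\grad g(y)}^*}{\norm{x-y}}$ by $1$, and $m/c_i\leq1$, so all three stay below $1$ and $mg\in\steinset^{c_{1:3}}$. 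Taking the supremum over this image of $\steinset$ and using homogeneity yields
\baligns
\langstein{Q}{\steinset^{c_{1:3}}}\;\geq\;\sup_{g\in\steinset}\big|\Esubarg{Q}{\langarg{mg}{X}}\big|\;=\;m\,\langstein{Q}{\steinset}.
\ealigns

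The right inequality is symmetric. For $g\in\steinset^{c_{1:3}}$ the defining constraints read $\norm{g(x)}^*\leq c_1$, $\norm{\grad g(x)}^*\leq c_2$, and $\norm{\grad g(x)-\grad g(y)}^*\leq c_3\norm{x-y}$, so with $M=\max(c_1,c_2,c_3)$ the rescaled function $g/M$ meets every unit bound defining $\steinset$, giving $\tfrac{1}{M}\steinset^{c_{1:3}}\subseteq\steinset$. Homogeneity then yields $\langstein{Q}{\steinset}\geq\tfrac{1}{M}\langstein{Q}{\steinset^{c_{1:3}}}$, i.e. $\langstein{Q}{\steinset^{c_{1:3}}}\leq M\,\langstein{Q}{\steinset}$.

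The argument is essentially bookkeeping, so I do not anticipate a genuine obstacle. The one point that needs care is that a \emph{single} scalar must render all three normalized quantities simultaneously admissible, which is precisely why $\min$ (respectively $\max$) of the Stein factors appears rather than any individual $c_i$: matching just one constraint could violate the other two. I would also verify explicitly that the Neumann-type boundary condition, being homogeneous in $g$, is invariant under both rescalings, so the set inclusions hold for the boundary-constrained sets and not merely for the interior bounds.
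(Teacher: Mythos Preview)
Your argument is correct and matches the paper's proof essentially line for line: both rely on the linearity of $\langevin{}$ in $g$ together with the set inclusions $c_{\min}\steinset \subseteq \steinset^{c_{1:3}} \subseteq c_{\max}\steinset$ obtained by rescaling. The paper states these inclusions without unpacking the individual constraints or the boundary condition, whereas you verify each explicitly, but the underlying reasoning is identical.
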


\section{Computing Stein Discrepancies}
\label{sec:programs}
In this section, we introduce an efficiently computable Stein discrepancy with convergence properties equivalent to those of the classical discrepancy. %
We restrict attention to the unconstrained domain $\xset=\reals^d$ in \secsref{graph-stein}-\secssref{l1-norm} and present extensions for constrained domains in \secref{constrained-programs}.
\subsection{Graph Stein Discrepancies} 
\label{sec:graph-stein}
Evaluating a Stein discrepancy $\langstein{Q}{\gset}$ for a fixed $(Q,P)$ pair reduces to solving an optimization program over functions $g\in\gset$.
For example, the classical Stein discrepancy is the optimum
\balign\label{eqn:classical-stein-program}
	\langstein{Q}{\steinset}
		=
		\sup_{g}\  &\textsum_{i=1}^n\ q(x_i) (\inner{g(x_i)}{\grad\log p(x_i)} + \inner{\grad}{g(x_i)}) \\ \notag
		\text{s.t.}\ 
		&\norm{g(x)}^* \leq 1, %
		\norm{\grad g(x)}^* \leq 1, %
		\norm{\grad g(x) - \grad g(y)}^* \leq \norm{x - y}, \forall x,y \in\xset.
\ealign
Note that the objective associated with any Stein discrepancy $\langstein{Q}{\gset}$ is linear in $g$ and, since $Q$ is discrete, 
only depends on $g$ and $\grad g$ through their values at each of the $n$ sample points $x_i$.
The primary difficulty in solving the classical Stein program \eqref{eqn:classical-stein-program} stems from the infinitude of constraints imposed by the classical Stein set $\steinset$.
One way to avoid this difficulty is to impose the classical smoothness constraints at only a finite collection of points.
To this end, for each finite graph $G = (V,E)$ with vertices $V\subset \xset$ and edges $E\subset V^2$, we define the \emph{graph Stein set},
\baligns
	&\gsteinset{}{G} 
		\defeq \bigg\{ g :\xset\to\reals^d \mid 
			\forall\, x\in V,\ \maxarg{\norm{g(x)}^*, \norm{\grad g(x)}^*} \leq 1 
			\text{  and, } \forall\, (x, y) \in E, \\
			&\maxarg{\textstyle\frac{\norm{g(x) - g(y)}^*}{\norm{x - y}},
				\textstyle\frac{\norm{\grad g(x) - \grad g(y)}^*}{\norm{x - y}},\textstyle\frac{\norm{g(x) - g(y) - {\grad g(x)}{(x - y)}}^*}{\frac{1}{2}\norm{x - y}^2},
				\textstyle\frac{\norm{g(x) - g(y) - {\grad g(y)}{(x - y)}}^*}{\frac{1}{2}\norm{x - y}^2}} \leq 1\bigg\},
\ealigns
the family of functions which satisfy the classical constraints and certain implied Taylor compatibility constraints at pairs of points in $E$.
Remarkably, if the graph $G_1$ consists of edges between all distinct sample points $x_i$, then the associated \emph{complete graph Stein discrepancy} $\langstein{Q}{\gsteinset{}{G_1}}$ is equivalent to the classical Stein discrepancy in the following strong sense. 
\begin{proposition}[Equivalence of Classical and Complete Graph Stein Discrepancies] \label{prop:complete-graph-equivalence}
If $\xset = \reals^d$, and $G_1=(\supp{Q}, E_1)$ with $E_1 = \{(x_i,x_l) \in \supp{Q}^2: x_i \neq x_l\}$, then %
\baligns
\langstein{Q}{\steinset} \leq \langstein{Q}{\gsteinset{}{G_1}} \leq \kappa_d\,\langstein{Q}{\steinset},
\ealigns
where $\kappa_d$ is a constant, independent of $(Q,P)$, depending only on the dimension $d$ and norm $\norm{\cdot}$.
\end{proposition}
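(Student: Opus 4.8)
The plan is to exploit the fact that, because $Q$ is discrete, the objective $\textsum_{i=1}^n q(x_i)\langarg{g}{x_i}$ of either program depends on a candidate $g$ only through the finite collection of \emph{jets} $\{(g(x_i),\grad g(x_i))\}_{i=1}^n$ at the sample points. Thus both discrepancies are optima of one and the same linear functional of the jet data, and the two programs differ only in the feasible set they impose on those jets. The lower bound $\langstein{Q}{\steinset}\le\langstein{Q}{\gsteinset{}{G_1}}$ then reduces to a feasible-set inclusion: the jets of any $g\in\steinset$ satisfy every constraint defining $\gsteinset{}{G_1}$. First I would verify this by Taylor's theorem with integral remainder. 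The vertex bounds $\norm{g(x)}^*,\norm{\grad g(x)}^*\le1$ are immediate; the first two edge constraints hold because $\norm{\grad g}^*\le1$ makes $g$ itself $1$-Lipschitz and because $\grad g$ is $1$-Lipschitz by hypothesis; and the two Taylor-compatibility constraints follow from writing $g(x)-g(y)-\grad g(x)(x-y)=\int_0^1[\grad g(y+t(x-y))-\grad g(x)](x-y)\dt$, bounding the integrand norm by $(1-t)\norm{x-y}^2$ via the $1$-Lipschitz bound on $\grad g$, and using $\int_0^1(1-t)\dt=\half$.

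The upper bound $\langstein{Q}{\gsteinset{}{G_1}}\le\kappa_d\,\langstein{Q}{\steinset}$ is the substantive direction and requires the converse: given an arbitrary $g\in\gsteinset{}{G_1}$, I must produce an honest function $\tilde g:\reals^d\to\reals^d$ that (i) reproduces the jets, $\tilde g(x_i)=g(x_i)$ and $\grad\tilde g(x_i)=\grad g(x_i)$ at every sample point, and (ii) satisfies $\tilde g/\kappa_d\in\steinset$, i.e.\ $\tilde g/\kappa_d$ obeys the three classical smoothness bounds everywhere on $\reals^d$ (there is no boundary condition since $\xset=\reals^d$). Because the objective is linear and depends only on the jets, property (i) forces the objective at the feasible point $\tilde g/\kappa_d$ to equal $1/\kappa_d$ times the objective at $g$; taking the supremum over $g\in\gsteinset{}{G_1}$ then yields the bound. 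The conceptual key is that the graph constraints on the complete edge set $E_1$ are precisely the Whitney compatibility conditions for a $C^{1,1}$ jet: the vertex bounds control $\norm{\tilde g}^*$ and $\norm{\grad\tilde g}^*$, the second edge constraint controls increments of $\grad\tilde g$, and the two second-order constraints encode exactly the first-order Taylor consistency needed for a globally $C^{1,1}$ interpolant to exist.

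Concretely, I would construct $\tilde g$ by a quantitative Whitney extension: decompose the complement of $\supp{Q}$ into Whitney cubes, attach to each sample point $x_i$ its affine Taylor jet $T_i(x)=g(x_i)+\grad g(x_i)(x-x_i)$, and glue the $T_i$ through a partition of unity that is flat (has vanishing gradient) at each $x_i$ and $\delta_{ij}$-valued across sample points, so that property (i) holds exactly. Each scalar component $g_j$ may be extended as a scalar $C^{1,1}$ function, and the vector/matrix/tensor dual norms $\norm{\cdot}^*$ can be interchanged at the price of finite, dimension-only factors, all absorbed into $\kappa_d$.

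The hard part will be guaranteeing that the Lipschitz constant of $\grad\tilde g$ is bounded by a constant depending \emph{only} on $d$ and $\norm{\cdot}$, and not on $n$ or the geometric configuration of the points. A naive partition of unity over balls centered at all $n$ sample points could have unbounded overlap and produce an $n$-dependent constant; the Whitney cube decomposition is exactly what enforces bounded overlap, so that at any location only $O_d(1)$ patches are active and second-derivative bounds accumulate only over neighboring sample points, where the Taylor-compatibility constraints guarantee the competing jets $T_i$ already agree to the required order. Making this accumulation quantitative, and certifying that the resulting $\grad\tilde g$ is globally Lipschitz with a purely dimensional constant, is the heart of the argument; the jet-matching bookkeeping and the norm-equivalence estimates are then routine.
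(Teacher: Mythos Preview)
Your proposal is correct and follows essentially the same route as the paper: the lower bound is the containment $\steinset\subseteq\gsteinset{}{G_1}$, and the upper bound is a Whitney $C^{1,1}$ extension of the jets $\{(g(x_i),\grad g(x_i))\}$ with a dimension-only constant. The only difference is that the paper simply invokes the Whitney--Glaeser extension theorem \citep{Glaeser58,Shvartsman08} as a black box, whereas you propose to reconstruct its proof via a Whitney cube decomposition and partition of unity; both lead to the same $\kappa_d$.
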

\propref{complete-graph-equivalence} follows from the Whitney-Glaeser extension theorem for smooth functions~\citep{Glaeser58,Shvartsman08} and implies that the complete graph Stein discrepancy inherits all of the desirable convergence properties of the classical discrepancy.
However, the complete graph also introduces order $n^2$ constraints, rendering computation infeasible for large samples. %
To achieve the same form of equivalence while enforcing only $O(n)$ constraints, we will make use of sparse \emph{geometric spanner} subgraphs.

\subsection{Geometric Spanners}
For a given dilation factor $t \geq 1$, 
a \emph{$t$-spanner} \cite{Chew86,PelegSc89} is a graph $G = (V, E)$ with weight $\norm{x-y}$ on each edge $(x,y)\in E$
and a path between each pair $x'\neq y'\in V$ with total weight no larger than $t \norm{x' - y'}$.
The next proposition shows that \emph{spanner Stein discrepancies} enjoy the same convergence properties as the complete graph Stein discrepancy.
\begin{proposition}[Equivalence of Spanner and Complete Graph Stein Discrepancies] \label{prop:spanner-equivalence}
If $\xset = \reals^d$, $G_t = (\supp{Q}, E)$ is a $t$-spanner, and $G_1=(\supp{Q}, \{(x_i,x_l) \in \supp{Q}^2: x_i \neq x_l\})$, then
\baligns
\langstein{Q}{\gsteinset{}{G_1}} \leq \langstein{Q}{\gsteinset{}{G_t}} \leq 2t^2\,\langstein{Q}{\gsteinset{}{G_1}}.
\ealigns
\end{proposition}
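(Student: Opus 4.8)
The plan is to derive both inequalities from inclusion relations between (rescaled) graph Stein sets, exploiting the fact that the objective $g \mapsto \Esubarg{Q}{\langarg{g}{X}}$ defining any graph Stein discrepancy is linear, hence positively homogeneous, in $g$. The lower bound is then immediate: since $G_t$ and $G_1$ share the vertex set $\supp{Q}$ while the spanner edge set satisfies $E \subseteq \{(x_i,x_l)\in\supp{Q}^2: x_i\neq x_l\}$, every constraint defining $\gsteinset{}{G_t}$ already appears in the definition of $\gsteinset{}{G_1}$, so $\gsteinset{}{G_1} \subseteq \gsteinset{}{G_t}$ and the supremum over the larger set dominates.

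For the upper bound it suffices to show that $g \in \gsteinset{}{G_t}$ implies $\tfrac{1}{2t^2} g \in \gsteinset{}{G_1}$; linearity and homogeneity of the objective then yield $\langstein{Q}{\gsteinset{}{G_t}} \leq 2t^2\,\langstein{Q}{\gsteinset{}{G_1}}$. Because the two sets impose identical vertex constraints and $2t^2 \geq 1$, the pointwise bounds $\norm{g(x)}^* \leq 1$ and $\norm{\grad g(x)}^* \leq 1$ survive the rescaling automatically. The entire burden therefore falls on verifying the four edge constraints of $\gsteinset{}{G_1}$, which must hold for \emph{every} pair $(x',y') \in \supp{Q}^2$, not merely the spanner edges. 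Fix such a pair and let $x' = v_0, v_1, \dots, v_k = y'$ be a spanner path with edges $(v_{j-1}, v_j) \in E$ and total weight $L \defeq \sum_{j=1}^k \norm{v_{j-1} - v_j} \leq t \norm{x' - y'}$.

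The two first-order constraints telescope directly: writing $g(x') - g(y') = \sum_j (g(v_{j-1}) - g(v_j))$ and combining the triangle inequality for $\norm{\cdot}^*$ with the per-edge spanner bounds $\norm{g(v_{j-1}) - g(v_j)}^* \leq \norm{v_{j-1} - v_j}$ gives $\norm{g(x') - g(y')}^* \leq L \leq t\norm{x'-y'}$, and identically for $\norm{\grad g(x') - \grad g(y')}^*$. Thus these constraints contribute a factor at most $t$.

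The second-order Taylor-compatibility constraints are the main obstacle. Telescoping the remainder gives $g(x') - g(y') - \grad g(x')(x'-y') = \sum_{j=1}^k \bigl( [g(v_{j-1}) - g(v_j) - \grad g(v_{j-1})(v_{j-1} - v_j)] + [\grad g(v_{j-1}) - \grad g(v_0)](v_{j-1} - v_j) \bigr)$. The first bracket is a genuine single-edge remainder, controlled by $\tfrac{1}{2}\norm{v_{j-1}-v_j}^2$ via the spanner constraint; the cross term is bounded by $\norm{\grad g(v_{j-1}) - \grad g(v_0)}^*\,\norm{v_{j-1} - v_j}$, and telescoping the gradient differences over the initial segment of the path bounds $\norm{\grad g(v_{j-1}) - \grad g(v_0)}^*$ by the partial length $S_{j-1} \defeq \sum_{i<j}\norm{v_{i-1}-v_i}$. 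Summing and invoking the elementary identity $\sum_j (S_{j-1}\ell_j + \tfrac{1}{2}\ell_j^2) = \tfrac{1}{2}L^2$ (with $\ell_j \defeq \norm{v_{j-1}-v_j}$) yields $\tfrac{1}{2}L^2 \leq \tfrac{1}{2}t^2\norm{x'-y'}^2$, i.e.\ a factor $t^2$ after dividing by $\tfrac{1}{2}\norm{x'-y'}^2$; the fourth constraint (expanding around $y'$) is handled symmetrically using suffix path lengths. Since $t \geq 1$, all four edge ratios are at most $t^2 \leq 2t^2$, so $\tfrac{1}{2t^2}g$ indeed lies in $\gsteinset{}{G_1}$, which closes the argument. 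I expect the careful accounting of these cross terms — in particular identifying the partial-path-length telescoping that avoids a cruder $L^2$-per-edge blowup — to be the only delicate point; the rest is set inclusion and homogeneity.
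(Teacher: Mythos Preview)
Your proposal is correct and follows essentially the same approach as the paper: the first inequality via the set inclusion $\gsteinset{}{G_1}\subseteq\gsteinset{}{G_t}$, and the second by telescoping $g$, $\grad g$, and the Taylor remainder along a spanner path. Your accounting is in fact slightly sharper than the paper's---you use the exact identity $\sum_j(S_{j-1}\ell_j+\tfrac12\ell_j^2)=\tfrac12 L^2$ to get a $t^2$ factor on the Taylor constraints, whereas the paper bounds the same sum crudely by $L^2$ to obtain $2t^2$ directly---but the argument is otherwise identical.
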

Moreover, for any $\ell_p$ norm, a 2-spanner with $O(\kappa_d n)$ edges can be computed in $O(\kappa_d n\log(n))$ expected time for $\kappa_d$ a constant depending only on $d$ and $\norm{\cdot}$~\citep{Har-PeledMe06}.
As a result, we will adopt a 2-spanner Stein discrepancy, $\langstein{Q}{\gsteinset{}{G_2}}$, as our standard quality measure.

\subsection{Decoupled Linear Programs}
\label{sec:l1-norm}
The final unspecified component of our Stein discrepancy is the choice of norm $\norm{\cdot}$.
We recommend the $\ell_1$ norm, as the resulting optimization problem decouples into $d$ independent finite-dimensional linear programs (LPs) that can be solved in parallel.
More precisely, $\langstein{Q}{\gsteinset{1}{(V,E)}}$ equals
\balign\label{eqn:l1-program}
	\textsum_{j=1}^d &\sup_{{\gamma_j\in\reals^{|V|},\Gamma_j\in\reals^{d\times |V|}}}
	\ \textsum_{i=1}^{|V|}\ q(v_i) (\gamma_{ji}{\grad_j\log p(v_i)} + \Gamma_{jji}) \\ \notag
	&\ \text{s.t.}\ \	\norm{\gamma_j}_\infty \leq 1, \norm{\Gamma_j}_\infty \leq 1, \text{ and } 
	\forall\, i\neq l : (v_i, v_l) \in E, \\ \notag
	&\quad\quad\maxarg{
	\textstyle\frac{|\gamma_{ji} - \gamma_{jl}|}{\norm{v_i - v_{l}}_1}, 
	\textstyle\frac{\norm{\Gamma_{j}(e_i -  e_l)}_\infty}{\norm{v_i - v_{l}}_1}, 
	\textstyle\frac{|\gamma_{ji} - \gamma_{jl} - \inner{\Gamma_{j}e_i}{v_i - v_{l}}|}{\frac{1}{2}\norm{v_i - v_{l}}_1^2},
	\textstyle\frac{|\gamma_{ji} - \gamma_{jl} - \inner{\Gamma_{j}e_l}{v_i - v_{l}}|}{\frac{1}{2}\norm{v_i - v_{l}}_1^2}} \leq 1.
\ealign
We have arbitrarily numbered the elements $v_i$ of the vertex set $V$ so that $\gamma_{ji}$ represents the function value $g_j(v_i)$, and $\Gamma_{jki}$ represents the gradient value $\grad_k g_j(v_i)$.
\subsection{Constrained Domains}
\label{sec:constrained-programs}
A small modification to the unconstrained formulation \eqref{eqn:l1-program} extends our tractable Stein discrepancy computation to any domain defined by coordinate boundary constraints, that is, to $\xset = (\alpha_1,\beta_1)\times \dots \times(\alpha_d,\beta_d)$ with $-\infty\leq \alpha_j < \beta_j \leq \infty$ for all $j$.
Specifically, for each dimension $j$, we augment the $j$-th coordinate linear program of \eqref{eqn:l1-program} with the boundary compatibility constraints
\balign\label{eqn:graph-boundary-constraints}
\maxarg{
\textstyle\frac{|\gamma_{ji}|}{|v_{ij} - b_j|},
\textstyle\frac{|\Gamma_{jki}|}{|v_{ij} - b_j|},
\textstyle\frac{|\gamma_{ji} - \Gamma_{jji} (v_{ij}-b_j)|}{\frac{1}{2} (v_{ij}-b_j)^2}
} \leq 1,
\ \text{for each}\ i,\ b_j \in \{\alpha_j, \beta_j\}\cap\reals,\ \text{and}\ k \neq j.
\ealign
These additional constraints ensure that our candidate function and gradient values can be extended to a smooth function satisfying the boundary conditions $\inner{g(z)}{n(z)} = 0$ on $\partial\xset$.
\propref{boundary-constraints} in the appendix shows that the spanner Stein discrepancy so computed is strongly equivalent to the classical Stein discrepancy on $\xset$.

\algref{multivariate} summarizes the complete solution for computing our recommended, parameter-free spanner Stein discrepancy in the multivariate setting.
Notably, the spanner step is unnecessary in the univariate setting, as the complete graph Stein discrepancy $\langstein{Q}{\gsteinset{1}{G_1}}$ can be computed directly by sorting the sample and boundary points and only enforcing constraints between consecutive points in this ordering.
Thus, the complete graph Stein discrepancy is our recommended quality measure when $d=1$, and a recipe for its computation is given in \algref{univariate}.

\begin{algorithm}[t]
   \caption{Multivariate Spanner Stein Discrepancy} %
   \label{alg:multivariate}
\begin{algorithmic}
	\STATE {\bfseries input:} $Q$, coordinate bounds $(\alpha_1,\beta_1), \dots, (\alpha_d,\beta_d) $ with $-\infty\leq \alpha_j<\beta_j\leq\infty$ for all $j$
	\STATE $G_{2} \gets$ Compute sparse 2-spanner of $\supp{Q}$
	\STATE \textbf{for} $j=1$ \TO $d$ \textbf{do (in parallel)}
	\STATE\hspace{5mm}  $r_j \gets$ Solve $j$-th coordinate linear program \eqref{eqn:l1-program} with graph $G_{2}$ and boundary constraints \eqref{eqn:graph-boundary-constraints}
	\RETURN $\sum_{j=1}^d r_j$
\end{algorithmic}
\end{algorithm}
\begin{algorithm}[t]
   \caption{Univariate Complete Graph Stein Discrepancy} %
   \label{alg:univariate}
\begin{algorithmic}
	\STATE {\bfseries input:} $Q$, bounds $(\alpha,\beta)$ with $-\infty \leq \alpha < \beta \leq \infty$
	\STATE $(x_{(1)}, \dots, x_{(n')}) \gets \tsc{Sort}(\{x_1,\dots,x_n,\alpha,\beta\}\cap\reals)$ 
	\RETURN 
	$\sup_{\gamma\in\reals^{n'}, \,\Gamma\in\reals^{n'}} \textstyle \sum_{i=1}^{n'} q(x_{(i)}) (\gamma_{i}{\deriv{}{x}\log p(x_{(i)})} + \Gamma_{i})$
	\baligns
	&s.t.\ \norm{\Gamma}_\infty \leq 1, \forall i \leq n', |\gamma_i| \leq \indic{\alpha < x_{(i)} < \beta},\text{ and, } \forall i < n',\\
	&\maxarg{\textstyle\frac{|\gamma_{i} - \gamma_{i+1}|}{x_{(i+1)}-x_{(i)}},
	\textstyle\frac{|\Gamma_{i} - \Gamma_{i+1}|}{x_{(i+1)}-x_{(i)}},
	\textstyle\frac{|\gamma_{i} - \gamma_{i+1} - \Gamma_i(x_{(i)} - x_{(i+1)})|}{\frac{1}{2}(x_{(i+1)}-x_{(i)})^2} ,
	\textstyle\frac{|\gamma_{i} - \gamma_{i+1} - \Gamma_{i+1}(x_{(i)} - x_{(i+1)})|}{\frac{1}{2}(x_{(i+1)}-x_{(i)})^2} }\leq 1
	\ealigns
\end{algorithmic}
\end{algorithm}

\section{Experiments}
\label{sec:experiments}

We now turn to an empirical evaluation of our proposed quality measures.
We compute all spanners using the efficient C++ greedy
spanner implementation of \citet{BoutsteBu14}
and solve all optimization programs using Julia for Mathematical Programming~\citep{LubinDu15} with the default Gurobi 6.0.4 solver \cite{Gurobi15}.
All reported timings are obtained using a single core of an Intel Xeon CPU E5-2650 v2 @ 2.60GHz.
\subsection{A Simple Example}
We begin with a simple example to illuminate a few properties of the Stein diagnostic.
For the target $P = \Gsn(0,1)$, we generate a sequence of sample points i.i.d.\ from the target and a second sequence i.i.d.\ from a scaled Student's t distribution with matching variance and 10 degrees of freedom.
The left panel of \figref{julia_incorrect_target_divergence} shows that the complete graph Stein discrepancy applied to the first $n$ Gaussian sample points decays to zero at an $n^{-0.52}$ rate, while the discrepancy applied to the scaled Student's t sample remains bounded away from zero.
The middle panel displays optimal Stein functions $g$ recovered by the Stein program for different sample sizes.
Each $g$ yields a test function $h \defeq \langevin{g}$, featured in the right panel, that best discriminates the sample $Q$ from the target $P$.
Notably, the Student's t test functions exhibit relatively large magnitude values in the tails of the support.
\begin{figure}
  \centering
  \includegraphics[width=\textwidth]{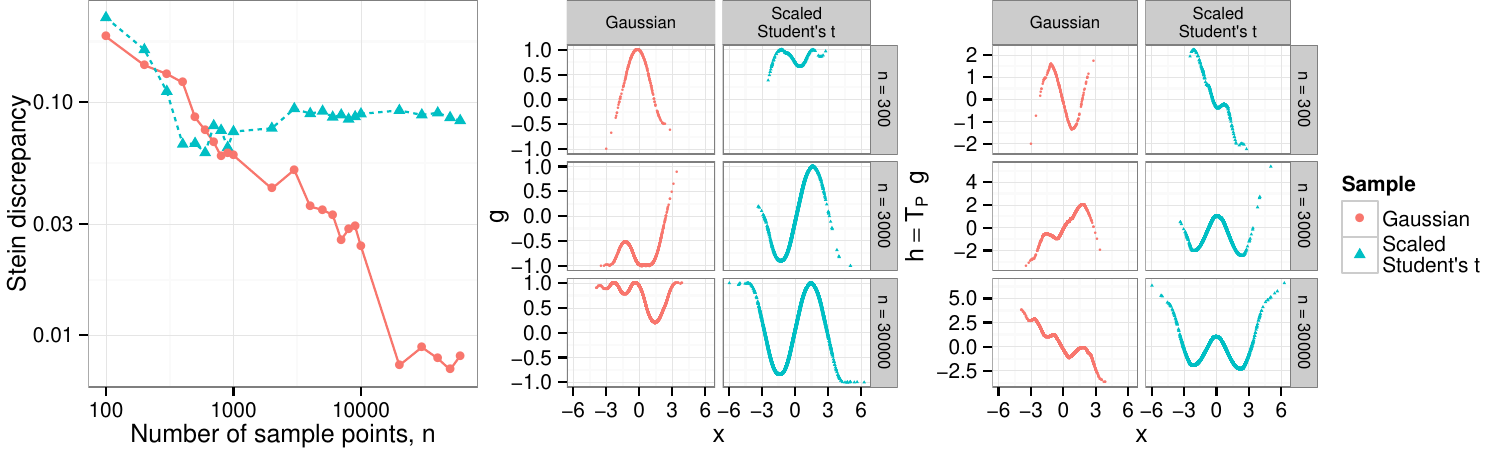}
  \caption{Left: Complete graph Stein discrepancy for a $\Gsn(0,1)$ target. 
  Middle / right: Optimal Stein functions $g$ and discriminating test
  functions $h=\langevin{g}$ recovered by the Stein program.}
  \label{fig:julia_incorrect_target_divergence}
\end{figure}

\subsection{Comparing Discrepancies} \label{sec:comparing-discrepancies}

We show in \thmref{univariate-equivalent-qcqp} in the appendix that, when
$d=1$, the classical Stein discrepancy is the optimum of a convex quadratically
constrained quadratic program with a linear objective, $O(n)$ variables, and
$O(n)$ constraints.  This offers the opportunity to directly compare the
behavior of the graph and classical Stein discrepancies.  
We will also compare to the Wasserstein distance $\wass$, which is computable for simple univariate target distributions \citep{Vallender74}
and provably lower bounds the non-uniform Stein discrepancies \eqref{eqn:non-uniform-stein-set} with $c_{1:3} = (0.5,0.5,1)$ for $P=\Unif(0,1)$ and $c_{1:3} = (1,4,2)$ for $P=\Gsn(0,1)$ \citep{ChenGoSh11,Dobler14}.
For $\Gsn(0,1)$ and $\Unif(0,1)$ targets and several random number generator seeds,
we generate a sequence of sample points i.i.d.\ from the target distribution and 
plot the non-uniform classical and complete graph Stein discrepancies and the Wasserstein distance as functions of the first $n$ sample points in \figref{julia_compare_discrepancies_d=1}.
Two apparent trends are that the graph Stein discrepancy very closely approximates the classical and that 
both Stein discrepancies track the fluctuations in Wasserstein distance even when a magnitude separation exists.
In the $\Unif(0,1)$ case, the Wasserstein distance in fact equals the classical Stein discrepancy because $\langevin{g} = g'$ is a Lipschitz function.

\subsection{Selecting Sampler Hyperparameters}
\begin{figure}
  \centering
  \includegraphics[width=1.02\textwidth]{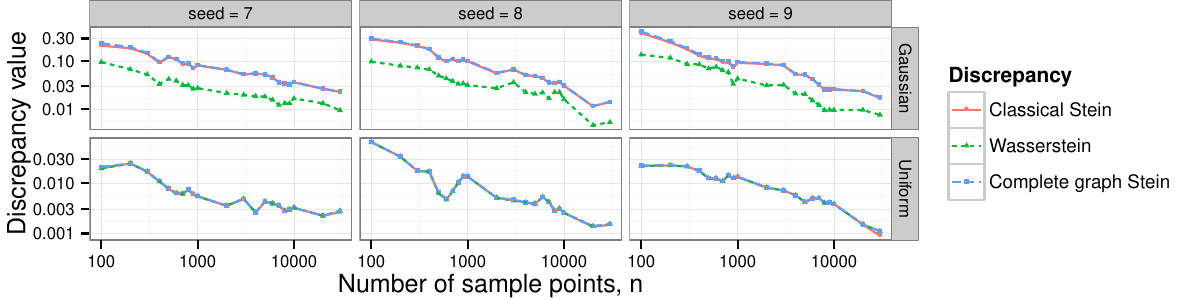}
  \caption{Comparison of discrepancy measures for sample sequences drawn i.i.d.\ from their targets.}
  \label{fig:julia_compare_discrepancies_d=1}
\end{figure}
Stochastic Gradient Langevin Dynamics (SGLD) \cite{WellingTe11} with constant step size $\eps$ is a biased MCMC procedure designed for scalable inference.
It approximates the overdamped Langevin diffusion, but, because no Metropolis-Hastings (MH) correction is used, 
the stationary distribution of SGLD deviates increasingly from its target as $\eps$ grows.
If $\epsilon$ is too small, however, SGLD explores the sample space too slowly.
Hence, an appropriate choice of $\epsilon$ is critical for accurate posterior inference.
To illustrate the value of the Stein diagnostic for this task, 
we adopt the bimodal Gaussian mixture model (GMM) posterior of \citep{WellingTe11} as our target.
For a range of step sizes $\eps$, we use SGLD with minibatch size 5 to draw 50 independent sequences of length $n = 1000$,
and we select the value of $\eps$ with the highest median quality --  either the maximum effective sample size (ESS, a standard diagnostic based on autocorrelation \citep{BrooksGeJoMe11}) or the minimum spanner Stein discrepancy -- across these sequences.
The average discrepancy computation consumes $0.4$s for spanner construction and $1.4$s per coordinate linear program.
As seen in \figref{julia_sgld-gmm_diagnostic_n=1000}, ESS, which does not detect distributional bias, selects the largest step size presented to it, while
the Stein discrepancy prefers an intermediate value.
The rightmost plot of \figref{julia_diagnostic_contour_distname=sgld-gmm_n=1000_seed=7} 
shows that a representative SGLD sample of size $n$ using the $\eps$ selected by ESS is greatly overdispersed; the leftmost is greatly underdispersed due to slow mixing.
The middle sample, with $\eps$ selected by the Stein diagnostic, most closely resembles the true posterior.

\subsection{Quantifying a Bias-Variance Trade-off}
The approximate random walk MH (ARWMH) sampler \cite{Korattikara2014} is a second biased MCMC procedure designed for scalable posterior inference.
Its tolerance parameter $\eps$ controls the number of datapoint likelihood evaluations used to approximate the standard MH correction step.
Qualitatively, a larger $\eps$ implies fewer likelihood computations, more rapid sampling, and a more rapid reduction of variance.
A smaller $\eps$ yields a closer approximation to the MH correction and less bias in the sampler stationary distribution.
We will use the Stein discrepancy to explicitly quantify this bias-variance trade-off. 

We analyze a dataset of $53$ prostate cancer patients with six binary predictors and a binary outcome indicating whether cancer has spread to surrounding lymph nodes~\citep{CantyRi15}.
Our target is the Bayesian logistic regression posterior~\cite{BrooksGeJoMe11} under a $\Gsn(0,I)$ prior on the parameters.
We run RWMH ($\eps = 0$) and ARWMH ($\eps = 0.1$ and batch size $= 2$)
for $10^5$ likelihood evaluations,
discard the points from the first $10^3$ evaluations,
and thin the remaining points to sequences of length $1000$.
The discrepancy computation time for $1000$ points averages $1.3$s for the spanner and $12$s for a coordinate LP.
\figref{approxmh_nodal} displays the spanner Stein discrepancy applied to the first $n$ points in each sequence as a function of the likelihood evaluation count.
We see that the approximate sample is of higher Stein quality for smaller computational budgets but is eventually overtaken by the asymptotically exact sequence.

To corroborate our result, we use a Metropolis-adjusted Langevin chain~\citep{RobertsTw96} of length $10^7$ as a surrogate $Q^*$ for the target 
and compute several error measures for each sample $Q$:
normalized probability error ${\max_l{ |\Earg{\sigma(\inner{X}{w_l})-\sigma(\inner{\PVAR}{w_l})}|}}{/\infnorm{w_l}}$,
mean error $\frac{\max_j |\Earg{X_j-\PVAR_j}|}{\max_j |\Esubarg{Q^*}{\PVAR_j}|}$,
and 
second moment error $\frac{\max_{j,k} |\Earg{X_jX_k-\PVAR_j\PVAR_k}|}{\max_{j,k} |\Esubarg{Q^*}{\PVAR_j\PVAR_k}|}$
for $X\sim Q$, $Z\sim Q^*$, $\sigma(t) \defeq\frac{1}{1+e^{-t}}$, and $w_l$ the $l$-th datapoint covariate vector.
The measures, also found in \figref{approxmh_nodal}, accord with the Stein discrepancy quantification.
\subsection{Assessing Convergence Rates}
\begin{figure}
  \centering
  \begin{subfigure}[b]{0.3\textwidth}
    \includegraphics[width=\textwidth]{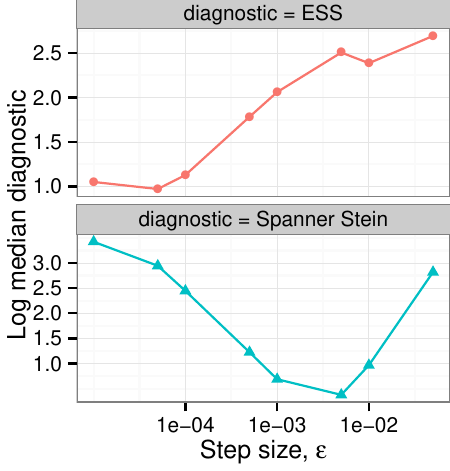}
    \caption{Step size selection criteria}
    \label{fig:julia_sgld-gmm_diagnostic_n=1000}
  \end{subfigure}
  \quad
  \begin{subfigure}[b]{0.63\textwidth}
    \includegraphics[width=\textwidth]{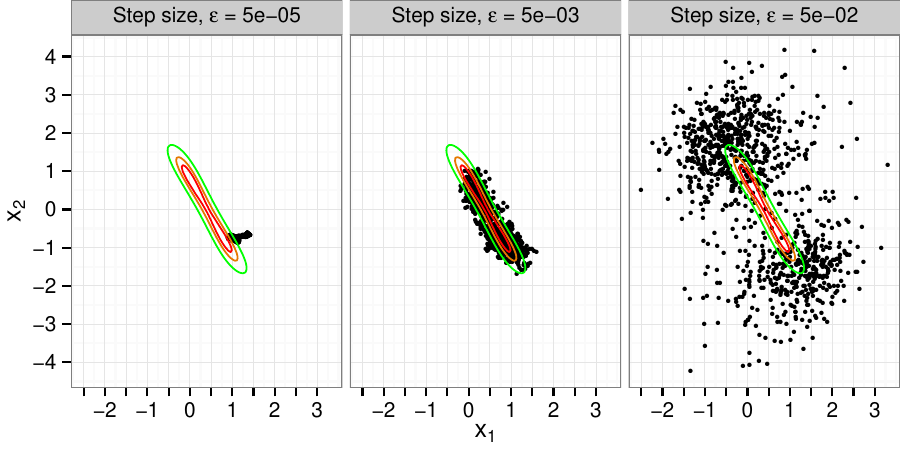}
    \caption{1000 SGLD sample points with equidensity contours of $p$ overlaid}
    \label{fig:julia_diagnostic_contour_distname=sgld-gmm_n=1000_seed=7}
  \end{subfigure}
  \caption{
    (\subref{fig:julia_sgld-gmm_diagnostic_n=1000}) 
      ESS maximized at $\epsilon=5\times 10^{-2}$;
      Stein discrepancy minimized at $\epsilon = 5\times
      10^{-3}$.\\
  }
\end{figure}
\begin{figure}
  \centering
  \includegraphics[width=0.95\textwidth]{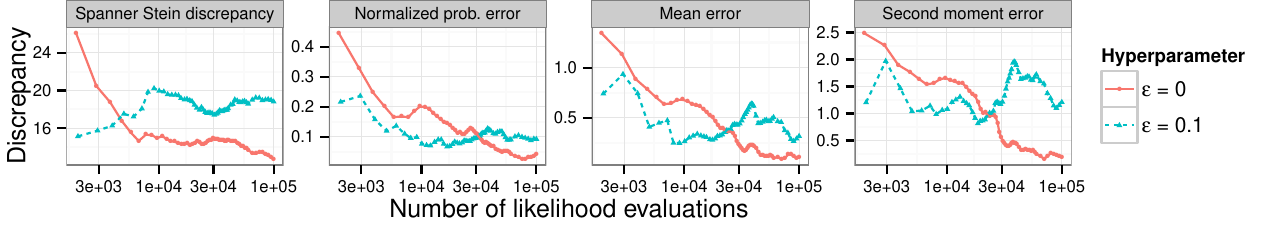}
  \caption{Bias-variance trade-off curves for Bayesian logistic regression with approximate RWMH.}
  \label{fig:approxmh_nodal}
\end{figure}
The Stein discrepancy can also be used to assess the quality of deterministic sample sequences.
In \figref{pseudosample_assessing_convergence_rates} in the appendix,
for $P = \Unif(0,1)$, we plot the complete graph Stein discrepancies of the first $n$ points of an i.i.d.\ $\Unif(0,1)$ sample, 
a deterministic Sobol sequence \cite{Sobol67}, 
and a deterministic kernel herding sequence \cite{Chen2010} defined by the norm $\norm{h}_{\hset}=\int_0^1 (h'(x))^2dx$.
We use the median value over 50 sequences in the i.i.d.\ case and estimate the convergence rate for each sampler using the slope of the best least squares affine fit to each log-log plot.
The discrepancy computation time averages $0.08$s for $n=200$ points, and the recovered rates of $n^{-0.49}$ and $n^{-1}$ for the i.i.d.\ and Sobol sequences accord with expected $O(1/\sqrt{n})$ and $O(1/n)$ bounds from the literature \cite{delBarrioGiMa99,WangSl08}.
As witnessed also in other metrics \cite{Bach2012},  the herding rate of $n^{-0.96}$ outpaces its best known bound of $\ipm(Q_n,P)=O(1/\sqrt{n})$, suggesting an opportunity for sharper  analysis.

\section{Discussion of Related Work}
\label{sec:conclusions}
We have developed a quality measure suitable for comparing biased, exact, and deterministic sample sequences by exploiting an infinite class of known target functionals.
The diagnostics of \citep{ZellnerMi95,FanBrGe06} also account for asymptotic bias but lose discriminating power by considering only a finite collection of functionals.  For example, for a $\Gsn(0,1)$ target, the score statistic of \citep{FanBrGe06} cannot distinguish two samples with equal first and second moments.
Maximum mean discrepancy (MMD) on a characteristic Hilbert space \citep{GrettonBoRaScSm06} takes full distributional bias into account but is only viable when the expected kernel evaluations are easily computed under the target.
One can approximate MMD, but this requires access to a separate trustworthy ground-truth sample from the target.

\setlength{\bibsep}{0pt plus 0.3ex}
\bibliographystyle{unsrtnat}
\ifdefined\arxivmode
\else
\subsubsection*{Acknowledgments}
The authors thank Madeleine Udell, Andreas Eberle, and Jessica Hwang for their pointers and feedback and Quirijn Bouts, Kevin Buchin, and Francis Bach for sharing their code and counsel.
{\small

}
\fi
\appendix
\begin{figure}[th]
  \centering
  \includegraphics[width=\textwidth]{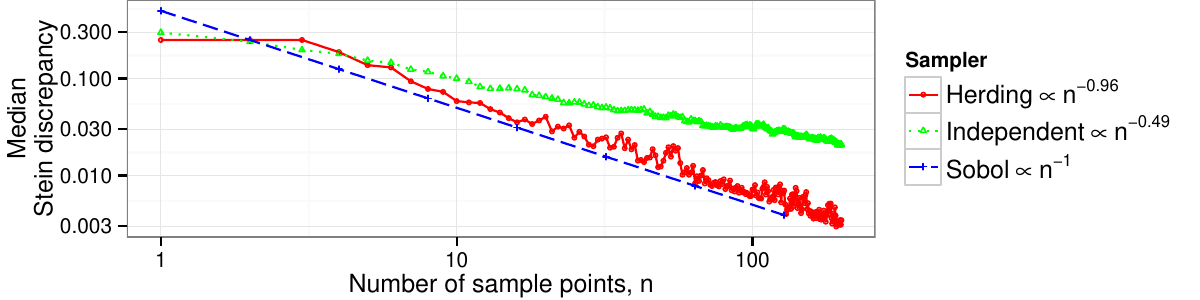}
  \caption{Comparison of complete graph Stein discrepancy convergence for $P=\Unif(0,1)$.}
  \label{fig:pseudosample_assessing_convergence_rates}
\end{figure}
\section{Proof of {\propref{langevin-zero}}}
Our integrability assumption together with the boundedness of $g$ and $\grad g$ imply that $\Esubarg{P}{\inner{\grad}{g(\PVAR)}}$
and $\Esubarg{P}{\inner{g(\PVAR)}{\grad \log p(\PVAR)}}$ exist.
Define the $\ell_\infty$ ball of radius $r$, $\ball_r = \{x\in\reals^d : \norm{x}_\infty\leq r\}$.
Since $\xset$ is convex, the intersection $\xset \cap \ball_r$ is compact and convex with Lipschitz boundary $\partial(\xset \cap \ball_r)$.
Thus, the divergence theorem (integration by parts) implies that
\baligns
	\Esubarg{P}{\langarg{g}{\PVAR}}
		&= \Esubarg{P}{\inner{\grad}{g(\PVAR)}+\inner{g(\PVAR)}{\grad \log p(\PVAR)}} 
		= \int_{\xset} \inner{\grad}{p(\pvar)g(\pvar)}\, d\pvar \\
		&= \lim_{r \to\infty}\int_{\xset \cap \ball_r} \inner{\grad}{p(\pvar)g(\pvar)}\, d\pvar 
		=  \lim_{r \to\infty} \int_{\partial(\xset \cap \ball_r)} \inner{g(\pvar)}{n_r(\pvar)}p(\pvar)\, d\pvar
\ealigns
for $n_r$ the outward unit normal vector to $\partial(\xset \cap \ball_r)$.
The final quantity in this expression equates to zero, as $\inner{g(x)}{n(x)} = 0$ for all $x$ on the boundary $\partial\xset$, 
$g$ is bounded, and $\lim_{m\to\infty}p(x_m) = 0$ for any $(x_m)_{m=1}^\infty$ with $x_m \in \xset$ for all $m$ and $\norm{x_m}_\infty\to\infty$.

\section{Proof of {\thmref{concave-stein-lower-bound}}: Stein Discrepancy Lower Bound for Strongly Log-concave Densities}
We let $C^k(\xset)$ denote the set of real-valued functions on $\xset$ with $k$ continuous derivatives
and $\smooth$ denote the \emph{smooth function distance}, the IPM generated by 
\[
	\textstyle\smoothset 
		\defeq \left\{h \in C^3(\xset) \smiddle
			\sup_{x\in\xset}\maxarg{\norm{\grad h(x)}^*, 
				\norm{\Hess h(x)}^*,
				\norm{\grad^3 h(x)}^*} \leq 1 \right\}.
\]
We additionally define the operator norms $\opnorm{v} \defeq \twonorm{v}$ for vectors $v\in \reals^d$, $\opnorm{M} \defeq \sup_{v\in \reals^d: \twonorm{v}=1}{\twonorm{Mv}}$ for matrices $M\in \reals^{d \times d}$ ,
and $\opnorm{T} \defeq \sup_{v\in \reals^d: \twonorm{v}=1}{\opnorm{T[v]}}$ for tensors $T \in \reals^{d\times d\times d}$.

The following result, %
proved in the companion paper~\citep{MackeyGo15}, 
establishes the existence of explicit constants (\emph{Stein factors}) $c_1, c_2, c_3>0$, such that, for any test function $h\in\smoothset$, the \emph{Stein equation}
\[
	h(x) - \Esubarg{P}{h(Z)} 
		= \langarg{g_h}{x}
\]
has a solution $g_h=\frac{1}{2}\grad u_h$ belonging to the non-uniform Stein set $\steinset^{c_{1:3}}$.
\begin{theorem}[Stein Factors for Strongly Log-concave Densities~{\citep[Theorem 2.1]{MackeyGo15}}] \label{thm:concave-stein-factors}
Suppose that $\xset = \reals^d$ and that $\log p \in C^4(\xset)$ is $k$-strongly concave with 
\[\sup_{z\in\xset}\opnorm{\grad^3 \log p(z)} \leq L_3 \qtext{and} \sup_{z\in\xset}\opnorm{\grad^4 \log p(z)} \leq L_4.\]
For each $x \in\xset$, let $(\process{t}{x})_{t\geq0}$ represent the overdamped Langevin diffusion with infinitestimal generator
\balign\label{eqn:langevin-generator}
\genarg{u}{x} = \frac{1}{2}\inner{\grad u(x)}{\grad\log p(x)} + \frac{1}{2}\inner{\grad}{\grad u(x)}
\ealign and initial state $\process{0}{x} = x$.
Then, for each $h \in C^3(\xset)$ with bounded first, second, and third derivatives, the function
\[
	u_h(x) \defeq \int_0^\infty \Esubarg{P}{h(\PVAR)} - \Earg{h(\process{t}{x})} \ dt
\]
solves the the Stein equation
\balign\label{eqn:stein-equation-generator}
h(x) - \Esubarg{P}{h(Z)} 
	= \genarg{u_h}{x}
\ealign
and satisfies
\baligns
	\sup_{z \in \xset} \twonorm{\grad u_h(z)} 
		\leq\ &\frac{2}{k} \sup_{z \in \xset} \twonorm{\grad h(z)}, \\
	\sup_{z \in \xset} \opnorm{\Hess u_h(z)} 
		\leq\ &\frac{2L_3}{k^2}\sup_{z \in \xset} \twonorm{\grad h(z)}+\frac{1}{k}\sup_{z \in \xset} \opnorm{\Hess h(z)},\ and \\
	\sup_{z,y \in \xset, z\neq y}\frac{\opnorm{\Hess u_h(z) - \Hess u_h(y)}}{\twonorm{z-y}} 
		\leq\ &\frac{6L_3^2}{k^3}\sup_{z \in \xset} \twonorm{\grad h(z)}+\frac{L_4}{k^2}\sup_{z \in \xset} \twonorm{\grad h(z)}\\
		+\ &\frac{3L_3}{k^2}\sup_{z \in \xset} \opnorm{\Hess h(z)} + \frac{2}{3k}\sup_{z \in \xset} \opnorm{\grad^3 h(z)}.
\ealigns
\end{theorem}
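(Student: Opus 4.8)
The plan is to build $u_h$ from the Markov semigroup of the overdamped Langevin diffusion and to control its derivatives through variational estimates along the flow. Write $\trans{t}{h}(x)\defeq\Earg{h(\process{t}{x})}$ for the semigroup, so that $u_h(x)=\int_0^\infty(\Esubarg{P}{h(Z)}-\trans{t}{h}(x))\dt$. Two structural facts drive the verification of the Stein equation \eqnref{stein-equation-generator}: $k$-strong log-concavity makes the diffusion geometrically ergodic, so $\trans{t}{h}(x)\to\Esubarg{P}{h(Z)}$ as $t\to\infty$, and the semigroup satisfies $\frac{d}{dt}\trans{t}{h}=\generator{}\trans{t}{h}$. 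Applying $\generator{}$ under the integral sign and telescoping then gives $\genarg{u_h}{x}=-\int_0^\infty\frac{d}{dt}\trans{t}{h}(x)\dt=h(x)-\Esubarg{P}{h(Z)}$, which is exactly \eqnref{stein-equation-generator}.

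Next I would reduce each of the three claimed bounds to an exponential decay estimate on the corresponding spatial derivative of the semigroup. Since $\Esubarg{P}{h(Z)}$ is constant in $x$, differentiating under the integral gives $\grad^j u_h(x)=-\int_0^\infty\grad^j\trans{t}{h}(x)\dt$ for $j=1,2,3$, so it suffices to bound $\opnorm{\grad^j\trans{t}{h}(x)}$ by a decaying exponential in $t$ with the right prefactor and integrate; the same decay estimates justify differentiating under the integral and show $u_h\in C^3$. The whole argument therefore reduces to understanding how spatial derivatives propagate along the flow map $x\mapsto\process{t}{x}$.

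To that end I would differentiate the flow and derive linear ODEs for its first, second, and third variations $\eta_t=\partial_v\process{t}{x}$, $\zeta_t=\partial_w\partial_v\process{t}{x}$, and $\xi_t=\partial_u\partial_w\partial_v\process{t}{x}$. Each has homogeneous part $\frac12\Hess\log p(\process{t}{x})$ and an inhomogeneity assembled from lower-order variations contracted against $\grad^3\log p$ and $\grad^4\log p$. Strong concavity supplies $\Hess\log p\psdle -k\ident$, so the homogeneous flow contracts at rate $e^{-kt/2}$; this yields $\twonorm{\eta_t}\leq e^{-kt/2}\twonorm{v}$ directly, and a variation-of-parameters estimate for $\zeta_t$ (forcing $\tfrac12\grad^3\log p[\eta,\eta]$, bounded via $L_3$) gives $\twonorm{\zeta_t}\leq\frac{L_3}{k}e^{-kt/2}$. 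Feeding these into the chain-rule expansions $\partial_v\trans{t}{h}(x)=\Earg{\inner{\grad h(\process{t}{x})}{\eta_t}}$ and $\partial_w\partial_v\trans{t}{h}(x)=\Earg{\Hess h(\process{t}{x})[\eta_t,\eta_t]+\inner{\grad h(\process{t}{x})}{\zeta_t}}$ and integrating over $t$ reproduces the first two claimed bounds, including their constants $\frac2k$, $\frac{2L_3}{k^2}$, and $\frac1k$.

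The third bound is where the real work lies, and I expect it to be the main obstacle. The third variation $\xi_t$ solves a linear ODE whose forcing collects one $\grad^4\log p[\eta,\eta,\eta]$ term and three $\grad^3\log p[\zeta,\eta]$ couplings; a variation-of-parameters bound, after carefully combining the mixed decay rates $e^{-kt/2}$, $e^{-kt}$, and $e^{-3kt/2}$ coming from products of $\eta_t$ and $\zeta_t$, yields $\twonorm{\xi_t}\leq\left(\frac{L_4}{2k}+\frac{3L_3^2}{k^2}\right)e^{-kt/2}$. The Fa\`a di Bruno expansion of $\grad^3\trans{t}{h}$ then splits into five groups (one in $\grad^3 h$, three in $\Hess h$, one in $\grad h$), and integrating their decay rates via $\int_0^\infty e^{-3kt/2}\dt=\frac2{3k}$, $\int_0^\infty e^{-kt}\dt=\frac1k$, and $\int_0^\infty e^{-kt/2}\dt=\frac2k$ lands precisely on the stated coefficients $\frac2{3k}$, $\frac{3L_3}{k^2}$, and $\frac{6L_3^2}{k^3}+\frac{L_4}{k^2}$. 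The delicate points are the tensor bookkeeping in the third-variation ODE, the uniform-in-$t$ integrability needed to differentiate under the integral three times, and confirming $u_h\in C^3$ so that the Hessian-Lipschitz quantity equals $\sup_z\opnorm{\grad^3 u_h(z)}$; the remaining estimates are disciplined Gr\"onwall-type calculations.
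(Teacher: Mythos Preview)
The paper does not actually prove this theorem; it is imported verbatim as Theorem~2.1 of the companion paper~\citep{MackeyGo15}, and only the statement appears here. So there is no in-paper proof to compare against.

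That said, your plan is the standard semigroup-plus-variational argument one expects for such Stein factor results, and it is the approach taken in the companion work. The key observations are exactly the ones you isolate: additive noise means the first, second, and third spatial variations $\eta_t,\zeta_t,\xi_t$ of the flow satisfy random linear ODEs (not SDEs) with homogeneous part $\tfrac12\Hess\log p(\process{t}{x})$; strong concavity gives the $e^{-kt/2}$ contraction of the homogeneous flow; and variation of parameters with forcing controlled by $L_3,L_4$ propagates this to higher variations. Your integration of the resulting decay rates reproduces the constants correctly, including the slightly wasteful step of bounding $\twonorm{\zeta_t}\leq\frac{L_3}{k}e^{-kt/2}$ rather than the sharper $\frac{L_3}{k}(e^{-kt/2}-e^{-kt})$, which is exactly what is needed to land on $\frac{2L_3}{k^2}$ rather than $\frac{L_3}{k^2}$. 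The points you flag as delicate---tensor bookkeeping in the $\xi_t$ equation, dominated-convergence justification for differentiating three times under $\int_0^\infty$, and the $C^3$ regularity of $u_h$---are the genuine technical content of the companion paper, and you have identified them accurately.
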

Hence, by the equivalence of non-uniform Stein discrepancies (\propref{stein-equiv}),
$\smooth(\mu, P) \leq \langstein{\mu}{\steinset^{c_{1:3}}} \leq \max(c_1,c_2,c_3) \langstein{\mu}{\steinset}$ for any probability measure $\mu$.

The desired result now follows from \lemref{smooth-wass}, which implies that the Wasserstein distance $\wass(\mu_m,P) \to 0$ whenever $\smooth(\mu_m, P) \to 0$ for a sequence of probability measures $(\mu_m)_{m\geq 1}$.
\begin{lemma}[Smooth-Wasserstein Inequality]\label{lem:smooth-wass}
If $\mu$ and $\nu$ are probability measures on $\reals^d$, and $\norm{v} \geq \norm{v}_2$ for all $v\in\reals^d$, then 
\baligns
\smooth(\mu,\nu) \leq \wass(\mu,\nu) \leq 3\maxarg{\smooth(\mu, \nu), \sqrt[3]{\smooth(\mu, \nu)\sqrt{2}\,\Earg{\norm{G}}^{2}}}.
\ealigns
for $G$ a standard normal random vector in $\reals^d$.
\end{lemma}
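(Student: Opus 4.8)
The plan is to prove the two inequalities separately. The lower bound $\smooth(\mu,\nu)\le\wass(\mu,\nu)$ is immediate: every $h\in\smoothset$ satisfies $\norm{\grad h(x)}^*\le 1$ for all $x$, so $|h(x)-h(y)|\le\int_0^1\norm{\grad h(y+t(x-y))}^*\norm{x-y}\dt\le\norm{x-y}$, i.e.\ $h$ is $1$-Lipschitz in $\norm{\cdot}$ and hence $h\in\wassset$. Thus $\smoothset\subseteq\wassset$, and the supremum defining $\smooth$ ranges over a smaller class. All of the work is in the upper bound, which I would obtain by Gaussian mollification of an arbitrary $1$-Lipschitz test function.

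Fix $h\in\wassset$ and, for a bandwidth $\sigma>0$, define the mollification $h_\sigma(x)\defeq\Earg{h(x+\sigma G)}$ with $G\sim\Gsn(0,I)$. Since $h$ is $1$-Lipschitz, $h_\sigma$ is finite and $\supnorm{h_\sigma-h}\le\Earg{\norm{\sigma G}}=\sigma\Earg{\norm{G}}$, which controls the approximation error. The crux is to show $h_\sigma\in C^3$ with explicitly bounded derivative dual norms. I would obtain closed forms by repeatedly applying Gaussian integration by parts (Stein's identity $\Earg{\grad_j f(G)}=\Earg{f(G)\,G_j}$) to transfer the $x$-derivatives onto the Gaussian density; this yields $\grad h_\sigma(x)=\Earg{\grad h(x+\sigma G)}$, $\Hess h_\sigma(x)=\frac1\sigma\Earg{\grad h(x+\sigma G)\otimes G}$, and the third-order tensor $(\grad^3 h_\sigma(x))_{ijk}=\frac1{\sigma^2}\Earg{\grad_j h(x+\sigma G)\,(G_iG_k-\delta_{ik})}$, each valid because $\grad h$ exists almost everywhere (Rademacher) and is bounded with $\norm{\grad h}^*\le 1$.

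From these formulas I would read off the three dual-norm bounds that place a rescaling of $h_\sigma$ into $\smoothset$. The gradient formula gives $\norm{\grad h_\sigma(x)}^*\le 1$ directly. For the third-order term, evaluating the trilinear form on unit vectors $u,v,w$ and using $|\inner{\grad h}{v}|\le 1$ reduces the bound to $\frac1{\sigma^2}\Earg{|\inner{G}{u}\inner{G}{w}-\inner{u}{w}|}\le\frac1{\sigma^2}\sqrt{\Vararg{\inner{G}{u}\inner{G}{w}}}$; since the hypothesis $\norm{\cdot}_2\le\norm{\cdot}$ forces $\norm{u}_2,\norm{w}_2\le 1$, the Gaussian product-moment identity gives $\Vararg{\inner{G}{u}\inner{G}{w}}=\norm{u}_2^2\norm{w}_2^2+\inner{u}{w}^2\le 2$, so $\norm{\grad^3 h_\sigma(x)}^*\le\sqrt2/\sigma^2$. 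An analogous but cruder computation gives $\norm{\Hess h_\sigma(x)}^*\le\sqrt{2/\pi}/\sigma$, which turns out to be dominated by $\max(1,\sqrt2/\sigma^2)$ over the relevant range of $\sigma$ and therefore never binds.

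Setting $c\defeq\max(1,\sqrt2/\sigma^2)$, the function $h_\sigma/c$ lies in $\smoothset$, so adding the two approximation errors yields $|\Esubarg{\mu}{h}-\Esubarg{\nu}{h}|\le c\,\smooth(\mu,\nu)+2\sigma\Earg{\norm{G}}$. Taking the supremum over $h\in\wassset$ gives $\wass(\mu,\nu)\le\max(1,\sqrt2/\sigma^2)\,\smooth(\mu,\nu)+2\sigma\Earg{\norm{G}}$ for every $\sigma>0$, and I would finish by optimizing $\sigma$. When $\smooth(\mu,\nu)\ge 2^{1/4}\Earg{\norm{G}}$, the choice $\sigma=2^{1/4}$ makes $c=1$ and bounds the right-hand side by $3\,\smooth(\mu,\nu)$; otherwise $\sigma=(\sqrt2\,\smooth(\mu,\nu)/\Earg{\norm{G}})^{1/3}<2^{1/4}$ balances the two terms and produces $3\sqrt[3]{\smooth(\mu,\nu)\sqrt2\,\Earg{\norm{G}}^{2}}$. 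Together these two regimes give exactly the claimed $3\max(\cdots)$. I expect the main obstacle to be the smoothness step rather than the optimization: rigorously establishing the derivative representations (differentiating under the expectation and justifying the iterated Stein identities for a merely Lipschitz $h$) and extracting the sharp constant $\sqrt2$ from the Gaussian fourth-moment calculation; once those bounds are in hand, the final $\sigma$-optimization is routine casework.
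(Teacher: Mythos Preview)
Your argument is correct. The paper itself does not prove this lemma: it cites Lemma~2.2 of the companion paper~\cite{MackeyGo15} for the $\twonorm{\cdot}$ case and states that the general-norm proof ``closely mirrors'' it. Your Gaussian mollification argument is exactly the standard technique one expects there, and your adaptation to general norms---using the hypothesis $\twonorm{v}\le\norm{v}$ to force $\twonorm{u}\le 1$ for $\norm{\cdot}$-unit test directions, which is what makes the Gaussian moment bounds $\Vararg{\inner{G}{u}\inner{G}{w}}\le 2$ and $\Earg{|\inner{G}{v}|}\le\sqrt{2/\pi}$ go through---is precisely the ``mirroring'' the paper alludes to. The derivative representations, the observation that the Hessian bound never binds, and the two-case optimization in $\sigma$ are all sound and recover the stated constant~$3$ exactly.
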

Lemma 2.2 of the companion paper~\citep{MackeyGo15} establishes this result for the case $\norm{\cdot} = \twonorm{\cdot}$; we omit the proof of the generalization which closely mirrors that of the Euclidean norm case.
\section{Proof of {\propref{stein-upper-bound}}: Stein Discrepancy Upper Bound}
Fix any $g$ in $\steinset$. 
By \propref{langevin-zero}, $\Earg{\langarg{g}{\PVAR}} = 0$.
The Lipschitz and boundedness constraints on $g$ and $\grad g$ now yield
\baligns
\Esubarg{Q}{\langarg{g}{X}} 
	&= \Earg{\langarg{g}{X} - \langarg{g}{\PVAR}}  \\
	&= \Earg{\inner{g(X)}{\grad\log p(X)} - \inner{g(\PVAR)}{\grad\log p(\PVAR)}+ \inner{\grad}{g(X)-g(\PVAR)}}\\
	&= \Earg{\inner{g(X)}{\grad\log p(X)-\grad\log p(\PVAR)} 
	+ \inner{g(X)-g(\PVAR)}{\grad\log p(\PVAR)}} \\
	&+ \Earg{\inner{\grad}{g(X)-g(\PVAR)}} \\
	&\leq \Earg{\norm{\grad\log p(X)-\grad\log p(\PVAR)}} +
\Earg{\norm{\grad \log p(\PVAR)(X-\PVAR)^\top}} + \norm{I}\Earg{\norm{X-\PVAR}}.
\ealigns

To derive the second advertised inequality, we use the definition of the matrix norm, the Fenchel-Young inequality for dual norms,
the definition of the matrix dual norm, and the Cauchy-Schwarz inequality in turn:
\baligns
\Earg{\norm{\grad \log p(\PVAR)(X-\PVAR)^\top}} 
	&= \Earg{\sup_{M:\norm{M}^* = 1} \inner{\grad \log p(\PVAR)}{M(X-\PVAR)}} \\
	&\leq \Earg{\sup_{M:\norm{M}^* = 1}\norm{\grad \log p(\PVAR)} \norm{M(X-\PVAR)}^*} \\
	&\leq \Earg{\norm{\grad \log p(\PVAR)} \norm{X-\PVAR}}
	\leq\ \textstyle\sqrt{\Earg{{\norm{\grad \log p(Z)}}^2}\Earg{\norm{X-Z}^2}}.
\ealigns
Since our bounds hold uniformly for all $g$ in $\steinset$, the proof is complete.

\section{Proof of {\propref{stein-equiv}}: Equivalence of Non-uniform Stein Discrepancies}
Fix any $c_1, c_2, c_3 >0$, and let $c_{\max} = \max(c_1,c_2,c_3)$ and $c_{\min} = \min(c_1,c_2, c_3)$.
Since the Stein discrepancy objective is linear in $g$, we have $a\,\langstein{Q}{\steinset} = \langstein{Q}{a\steinset}$ for any $a > 0$.
The result now follows from the observation that $c_{\min}\steinset \subseteq \steinset^{c_{1:3}} \subseteq c_{\max}\steinset$.
\section{Proof of {\propref{complete-graph-equivalence}}: Equivalence of Classical and Complete Graph Stein Discrepancies}
The first inequality follows from the fact that $\steinset \subseteq \gsteinset{}{G_1}$.
By the Whitney-Glaeser extension theorem~\cite[Thm. 1.4]{Shvartsman08} of \citet{Glaeser58}, for every function
$g \in \gsteinset{}{G_1}$, there exists a function $\tilde{g} \in \kappa_d\,\steinset^*$ with $g(x_i) = \tilde{g}(x_i)$ and $\grad g(x_i) = \grad\tilde{g}(x_i)$ for all $x_i$ in the support of $Q$.
Here $\kappa_d$ is a constant, independent of $(Q,P)$, depending only on the dimension $d$ and norm $\norm{\cdot}$.
Since the Stein discrepancy objective is linear in $g$ and depends on $g$ only through the values $g(x_i)$ and $\grad g(x_i)$, we have
$\langstein{Q}{\gsteinset{}{G_1}} \leq \langstein{Q}{\kappa_d\steinset} =\kappa_d  \,\langstein{Q}{\steinset}$.

\section{Proof of {\propref{spanner-equivalence}}: Equivalence of Spanner and Complete Graph Stein Discrepancies}
\label{sec:spanner-equivalence-proof}
The first inequality follows from the fact that $\gsteinset{}{G_1} \subseteq \gsteinset{}{G_t}$.
Fix any $g \in \gsteinset{}{G_t}$ and any pair of points $z, z' \in \supp{Q}$.
By the definition of $\gsteinset{}{G_t}$, we have $\maxarg{\norm{g(z)}^*, \norm{\grad g(z)}^*} \leq 1$.
By the $t$-spanner property, there exists a sequence of points $z_0, z_1, z_2, \dots, z_{L-1}, z_{L}\in \supp{Q}$ with $z_0=z$ and $z_L=z'$ for which $(z_{l-1}, z_{l})\in E$ for all $1\leq l \leq L$ and 
$\sum_{l=1}^L \norm{z_{l-1} - z_{l}} \leq t\norm{z_0-z_L}$.
Since $\maxarg{\frac{\norm{g(z_{l-1}) - g(z_{l})}^*}{\norm{z_{l-1}-z_l}} ,\frac{\norm{\grad g(z_{l-1}) - \grad g(z_{l})}^*}{\norm{z_{l-1}-z_l}}} \leq 1$ for each $l$, the triangle inequality implies that
\[
	\norm{\grad g(z_{0}) - \grad g(z_{L})}^*
	\leq \sum_{l=1}^L \norm{\grad g(z_{l-1}) - \grad g(z_{l})}^*
	\leq \sum_{l=1}^L \norm{z_{l-1} - z_{l}} \leq t\norm{z_0 - z_L}.
\]
Identical reasoning establishes that $\norm{g(z_{0}) - g(z_{L})}^* \leq t\norm{z_0 - z_L}$.

Furthermore, since $\norm{g(z_{l-1}) - g(z_{l}) - {\grad g(z_l)}{(z_{l-1} - z_{l})}}^* \leq \frac{1}{2}\norm{z_{l-1}-z_l}^2$ for each $l$, the triangle inequality and the definition of the tensor norm $\norm{\cdot}^*$ imply that
\baligns
	&\norm{g(z_0) - g(z_L) - {\grad g(z_L)}{(z_0 - z_L)}}^*\\
		&\leq \sum_{l=1}^L \norm{g(z_{l-1}) - g(z_{l}) - {\grad g(z_l)}{(z_{l-1} - z_{l})}}^* +
		\norm{{(\grad g(z_l) - \grad g(z_L))}{(z_{l-1} - z_{l})}}^* \\
		&\leq \sum_{l=1}^L \frac{1}{2}\norm{z_{l-1} - z_{l}}^2 + \norm{\grad g(z_l) - \grad g(z_L)}^*\norm{z_{l-1} - z_{l}} \\
		&\leq \sum_{l=1}^L \frac{1}{2}\norm{z_{l-1} - z_{l}}^2 + \sum_{l'=l}^{L-1}\norm{\grad g(z_{l'}) - \grad g(z_{l'+1})}^*\norm{z_{l-1} - z_{l}} \\
		&\leq \sum_{l=1}^L \norm{z_{l-1} - z_{l}}\left(\frac{1}{2}\norm{z_{l-1} - z_{l}}+\sum_{l'=l}^{L-1}\norm{z_{l'} - z_{l'+1}}\right)
		\leq \left(\sum_{l=1}^L \norm{z_{l-1} - z_{l}}\right)^2
		\leq t^2\norm{z_0 - z_L}^2.
\ealigns
Since $z, z'$ were arbitrary, and the Stein discrepancy objective is linear in $g$, we conclude that
$\langstein{Q}{\gsteinset{}{G_t}} \leq \langstein{Q}{2t^2\gsteinset{}{G_1}} =2t^2  \,\langstein{Q}{\gsteinset{}{G_1}}$.
\section{Finite-dimensional Classical Stein Program}
\begin{theorem}[Finite-dimensional Classical Stein Program]
\label{thm:univariate-equivalent-qcqp}
If $\xset=(\alpha,\beta)$ for $-\infty \leq \alpha < \beta \leq \infty$, and $x_{(1)} < \dots < x_{(n')}$ represent the sorted values of $\{x_1,\dots,x_n,\alpha,\beta\}\cap\reals$, then 
the non-uniform classical Stein discrepancy
$\langstein{Q}{\steinset^{c_{1:3}}}$ is the optimal value of the
convex program
\begin{subequations}
\label{eqn:stein_univariate_program}
\begin{align}
  \underset{g}{\max}\quad
  &\textsum_{i=1}^{n'}\textstyle q(x_{(i)}) \deriv{}{x} \log p(x_{(i)}) g(x_{(i)}) + q(x_{(i)}) g'(x_{(i)}) & \\
  {\text{s.t.}}\quad &\forall i\in\{1,\dots,n'-1\},\ 
  |g'(x_{(i)})| \le c_2,\ 
  |g(x_{(i+1)}) - g(x_{(i)})| \le c_2 (x_{(i+1)}-x_{(i)}), & \label{eqn:qcqp_c_constraints} \\
  &g(x_{(i)}) - g(x_{(i+1)}) + \frac{1}{4c_3}\left (g'(x_{(i)}) - g'(x_{(i+1)}) \right)^2 +
  \frac{x_{(i+1)}-x_{(i)}}{2}\left (g'(x_{(i)}) + g'(x_{(i+1)}) \right)\nonumber \\
  &\quad+ \frac{1}{c_3}(L_b)_+^2\le \frac{c_3}{4}(x_{(i+1)}-x_{(i)})^2, & \label{eqn:qcqp_lb} \\
  & g(x_{(i+1)}) - g(x_{(i)}) + \frac{1}{4c_3}\left (g'(x_{(i)}) -
  g'(x_{(i+1)})\right )^2 - \frac{x_{(i+1)}-x_{(i)}}{2}\left
  (g'(x_{(i)}) + g'(x_{(i+1)})\right ) \nonumber \\
  &\quad+ \frac{1}{c_3}(L_u)_+^2 \le \frac{c_3}{4}(x_{(i+1)}-x_{(i)})^2, \qtext{and} & \label{eqn:qcqp_lu} \\
  &\forall i\in\{1,\dots,n'\}, |g(x_{(i)})| \le \indic{\alpha < x_{(i)} < \beta} (c_1 - \frac{1}{2c_3}g'(x_{(i)})^2) & \label{eqn:qcqp_buffer_constraints}
\end{align}
\end{subequations}
where $(r)_+ \defeq \maxarg{r,0}$, 
\baligns
&\textstyle L_b \defeq \frac{c_3}{2} (x_{(i+1)}-x_{(i)}) - \frac{1}{2}\left (g'(x_{(i)}) + g'(x_{(i+1)})\right) - c_2, \qtext{and} \\
&\textstyle L_u \defeq \frac{c_3}{2} (x_{(i+1)}-x_{(i)}) + \frac{1}{2}\left (g'(x_{(i)}) + g'(x_{(i+1)}) \right ) -c_2.
\ealigns
\end{theorem}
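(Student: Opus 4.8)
The plan is to collapse the infinite-dimensional supremum defining $\langstein{Q}{\steinset^{c_{1:3}}}$ into a finite-dimensional optimization over the \emph{jet data} $(v_i, w_i) \defeq (g(x_{(i)}), g'(x_{(i)}))$ at the sorted points $x_{(1)} < \dots < x_{(n')}$. Since the Stein objective is linear in $g$ and, for discrete $Q$, depends on $g$ only through these values (the adjoined finite endpoints carry zero $Q$-weight but still enter as optimization variables so their constraints can be imposed), the identity reduces to a single claim: the set of jet data \emph{realizable} by some $g \in \steinset^{c_{1:3}}$ coincides exactly with the feasible region cut out by \eqref{eqn:qcqp_c_constraints}--\eqref{eqn:qcqp_buffer_constraints}. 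Establishing this characterization---a one-dimensional, fully explicit instance of a Whitney--Glaeser jet-extension problem---is the entire content of the theorem, and I would prove it by the two inclusions below. Convexity and attainment then follow for free: each constraint is affine, or the square of an affine function, or $(\cdot)_+^2$ of an affine function (all convex), the objective is linear, and the feasible set is compact since $|v_i|, |w_i|$ are bounded by $c_1, c_2$; hence \eqref{eqn:stein_univariate_program} is a convex program whose optimum is attained and equals the supremum.

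For the necessity inclusion I would fix $g \in \steinset^{c_{1:3}}$ and read off each constraint from the defining bounds $|g| \le c_1$, $|g'| \le c_2$, the $c_3$-Lipschitz continuity of $g'$, and $g = 0$ at finite endpoints. The $c$-constraints \eqref{eqn:qcqp_c_constraints} are immediate: $|g'(x_{(i)})| \le c_2$ by definition and $|v_{i+1} - v_i| = |\int_{x_{(i)}}^{x_{(i+1)}} g'| \le c_2 (x_{(i+1)} - x_{(i)})$. The buffer constraint \eqref{eqn:qcqp_buffer_constraints} captures the forced parabolic overshoot: because $g'$ is $c_3$-Lipschitz, moving away from $x_{(i)}$ in the direction of its slope gives $g \ge v_i + w_i s - \tfrac{c_3}{2}s^2$, whose maximum $v_i + \tfrac{w_i^2}{2c_3}$ must respect $|g| \le c_1$, while the indicator kills the value at finite boundary points. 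The heart of necessity is \eqref{eqn:qcqp_lb}--\eqref{eqn:qcqp_lu}, which bound the increment $v_{i+1} - v_i = \int_{x_{(i)}}^{x_{(i+1)}} g'$ by the extremal integrals of a $c_3$-Lipschitz profile pinned to $w_i, w_{i+1}$ and confined to $[-c_2, c_2]$. The maximizing profile is the ``tent'' envelope with slopes $\pm c_3$ and peak $\tfrac{w_i + w_{i+1}}{2} + \tfrac{c_3}{2}(x_{(i+1)} - x_{(i)})$; clipping this peak at height $c_2$ removes a triangle of area $(L_u)_+^2/c_3$, exactly the correction in \eqref{eqn:qcqp_lu}, and the symmetric valley analysis produces \eqref{eqn:qcqp_lb} through $L_b$.

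For the sufficiency inclusion I would construct an explicit $g \in \steinset^{c_{1:3}}$ from any feasible jet data by specifying its derivative $\phi = g'$ one interval at a time. On each $[x_{(i)}, x_{(i+1)}]$, constraints \eqref{eqn:qcqp_lb}--\eqref{eqn:qcqp_lu} place the required increment $v_{i+1} - v_i$ inside the closed interval of integrals achievable by $c_3$-Lipschitz, $[-c_2,c_2]$-valued profiles interpolating $w_i, w_{i+1}$; since this family of achievable integrals is connected (one can continuously deform the clipped lower envelope into the clipped upper envelope), some such $\phi$ realizes the exact increment. Taking $g$ to be the antiderivative and gluing across points succeeds because the pieces share values and slopes at every junction, and a function that is piecewise $c_3$-Lipschitz with matching junction values is globally $c_3$-Lipschitz; the finite boundary values are forced to $0$ by \eqref{eqn:qcqp_buffer_constraints}, and the two semi-infinite end intervals are filled by the minimal-excursion parabola that returns $g'$ to $0$. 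It then remains to verify $|g| \le c_1$ \emph{everywhere}, which I would control using the buffer constraints to bound the within-interval excursion of $g$ in terms of the endpoint values and slopes.

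The main obstacle is precisely this last verification in the sufficiency direction: producing a single $g$ that simultaneously respects the sup bound $|g| \le c_1$ between consecutive points and on the unbounded end intervals, the bound $|g'| \le c_2$, the $c_3$-Lipschitz condition, and the prescribed per-interval integral. The clipped-tent derivative and the overshoot buffer $\tfrac{1}{2c_3}w_i^2$ are exactly the devices that make these requirements compatible, but confirming that the chosen antiderivative never escapes $[-c_1, c_1]$---especially when an endpoint value sits near $\pm c_1$ with a large slope, or when the increment drives the profile against its $\pm c_2$ clip---is the delicate bookkeeping that the one-dimensional explicit construction must carry out in place of the general extension machinery invoked in \propref{complete-graph-equivalence}.
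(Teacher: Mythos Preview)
Your plan is the paper's proof: the two inclusions on jet data, the clipped-tent envelopes $B,U$ for \eqref{eqn:qcqp_lb}--\eqref{eqn:qcqp_lu}, the parabolic overshoot for \eqref{eqn:qcqp_buffer_constraints}, and the interval-by-interval antiderivative for sufficiency all match exactly.

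The obstacle you flag at the end is real, and the paper closes it with one device your sketch does not yet contain. Your connectedness argument produces \emph{some} admissible profile $\phi$ with the prescribed integral, but an arbitrary such $\phi$ can let $|g|$ escape $[-c_1,c_1]$ mid-interval. The paper instead writes $\tilde g' = \zeta_i m_i + (1-\zeta_i) M_i$ with the pair $(m_i,M_i)$ selected by a three-way case split on the sign behavior of $B$ and $U$ on $[x_{(i)},x_{(i+1)}]$: take $(B,U)$ if neither changes sign; take $(B,\max\{B,0\})$ or $(\max\{B,0\},U)$ if exactly one does, splitting on whether the required increment exceeds $\int\max\{B,0\}$; and if both change sign, introduce a hybrid $W$ that follows $B$ down to its root, holds at zero, then follows $U$ from its root. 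Each choice is engineered so that the only interior zeros of $\tilde g'$ occur at roots of $B$ or $U$, and at such a root $r$ the value $\tilde g(r)$ equals an endpoint value plus exactly the parabolic excursion $g'(x_{(i)})^2/(2c_3)$ already controlled by \eqref{eqn:qcqp_buffer_constraints}. This case analysis is the ``delicate bookkeeping'' you anticipate; everything else in your proposal is already the paper's argument.
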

We say the program \eqref{eqn:stein_univariate_program} is finite-dimensional, because it suffices to optimize over vectors $\gamma, \Gamma \in \reals^{n'}$
representing the function values ($\gamma_i = g(x_{(i)})$) and derivative values ($\Gamma_i = g'(x_{(i)})$) at each sample or boundary point $x_{(i)}$.
Indeed, by introducing slack variables, this program is representable as a convex quadratically constrained quadratic program with $O(n)$ constraints, $O(n)$ variables, and a linear objective.
Moreover, the pairwise constraints in this program are only enforced between
neighboring points in the sequence of ordered locations $x_{(i)}$. Hence the resulting constraint matrix
is sparse and banded, making the problem particularly amenable to efficient
optimization.

\begin{proof}
Throughout, we say that $\tilde{g}$ is an {extension} of $g$ if $\tilde{g}(x_{(i)})=g(x_{(i)})$ and $\tilde{g}'(x_{(i)}) = g'(x_{(i)})$ for each $x_{(i)}\in\supp{Q}$.
Since the Stein objective only depends on $g$ and $g'$ through their values at sample points, $g$ and any extension $\tilde{g}$ have identical objective values.

We will establish our result by showing that every $g\in\steinset^{c_{1:3}}$ is feasible for the program (\ref{eqn:stein_univariate_program}),
so that $\langstein{Q}{\steinset^{c_{1:3}}}$ lower bounds the optimum of \eqref{eqn:stein_univariate_program},
and that every feasible $g$ for (\ref{eqn:stein_univariate_program}) has an extension in $\tilde{g}\in\steinset^{c_{1:3}}$, so that $\langstein{Q}{\steinset^{c_{1:3}}}$ also upper bounds the optimum of \eqref{eqn:stein_univariate_program}.

\subsection{Feasibility of $\steinset^{c_{1:3}}$}
Fix any $g\in\steinset^{c_{1:3}}$.
Also, since $g'$ is $c_2$-bounded and $c_3$-Lipschitz, the constraints \eqref{eqn:qcqp_c_constraints} must be satisfied.
Consider now the $c_2$-bounded and $c_3$-Lipschitz extensions of $g'$
\baligns
B(t) &\triangleq \max(-c_2, \max_{1\leq i\leq n'} \left [g'(x_{(i)}) -
  c_3|t - x_{(i)}|\right ])\,\text{ and }\, \\
U(t) &\triangleq \min(c_2, \min_{1\leq i\leq n'} \left [g'(x_{(i)}) +
  c_3|t - x_{(i)}|\right ]).
\ealigns
We know that $B(t) \le g'(t) \le U(t)$ for all $t$, for, if not, there would
be a point $t_0$ and a point $x_{(i)}$ such that $|g'(x_{(i)}) - g'(t_0)| > c_3 |x_{(i)}-
t_0|$, which combined with the $c_3$-Lipschitz property would be a
contradiction.
Thus, for each sample $x_{(i)}$, the fundamental theorem of calculus gives
\baligns
g(x_{(i+1)}) - g(x_{(i)}) = \int_{x_{(i)}}^{x_{(i+1)}} g'(t)\,d t \ge \int_{x_{(i)}}^{x_{(i+1)}} B(t)\,d t.
\ealigns
The right-hand side of this inequality evaluates precisely to the right-hand side of the constraint \eqref{eqn:qcqp_lb}.
An analogous upper bound using $U(t)$ yields \eqref{eqn:qcqp_lu}.

Finally, consider any point $x_{(i)}$.
If $x_{(i)} \in \{\alpha,\beta\}$, then (\ref{eqn:qcqp_buffer_constraints}) is satisfied as $g(z) = 0$ for any point $z$ on the boundary.
Suppose instead that $\alpha < x_{(i)} < \beta$. 
Without loss of generality, we may assume that $g'(x_{(i)}) \ge 0$.
Since $g'$ is $c_3$-Lipschitz, we have $g'(t) \geq g'(x_{(i)}) - c_3 |t-x_{(i)}|$ for all $t$.
Integrating both sides of this inequality from $x_{(i)}$ to $x_u = x_{(i)} + g'(x_{(i)})/c_3$, we obtain
\[
g(x_u) - g(x_{(i)}) = \int_{x_{(i)}}^{x_u} g'(t)\ dt \geq \int_{x_{(i)}}^{x_u} g'(x_{(i)}) - c_3 (t-x_{(i)})\ dt = g'(x_{(i)})^2/(2c_3)
\]
Since $g(x_u) \leq c_1$, we have
$\frac{1}{2c_3}g'(x_{(i)})^2 + g(x_{(i)}) \le c_1$. Similarly, by integrating the
inequality from $x_b = x_{(i)} - g'(x_{(i)})/c_3$ to $x_{(i)}$, we have $g(x_b) - g(x_{(i)})
\ge g'(x_{(i)})^2/(2c_3)$, which combined with $g(x_b) \leq c_1$ yields
(\ref{eqn:qcqp_buffer_constraints}).

\subsection{Extending Feasible Solutions}
Suppose now that $g$ is any function feasible for the program
(\ref{eqn:stein_univariate_program}).  We will construct an extension
$\tilde{g}\in\steinset^{c_{1:3}}$ by first working independently over each
interval $(x_{(i)},x_{(i+1)})$.  Fix an index $i < n'$.  Our strategy is to
identify a pair of $c_2$-bounded, $c_3$-Lipschitz functions $m_i$ and $M_i$
defined on the interval $[x_{(i)}, x_{(i+1)}]$ which satisfy $m_i(x)\le M_i(x)$ for
all $x\in [x_{(i)}, x_{(i+1)}]$, $m_i(x)=M_i(x)=g'(x)$ for $x\in\{x_{(i)}, x_{(i+1)}\}$,
and $\int_{x_{(i)}}^{x_{(i+1)}} m_i(t)d t \le g(x_{(i+1)})-g(x_{(i)}) \le
\int_{x_{(i)}}^{x_{(i+1)}}M_i(t)d t$.  For any such $(m_i,M_i)$ pair, there exists
$\zeta_i\in [0,1]$ satisfying
\begin{equation*}
g(x_{(i+1)}) - g(x_{(i)}) = \int_{x_{(i)}}^{x_{(i+1)}} \zeta_i m_i(t) + (1-\zeta_i)M_i(t)d t,
\end{equation*}
and hence we will define the extension
\begin{equation*}
\tilde{g}(x) = g(x_{(i)}) + \int_{x_{(i)}}^x \zeta_i m_i(t) + (1-\zeta_i)M_i(t)d t.
\end{equation*}
By convexity, the extension derivative $\tilde{g}'$ is $c_2$-bounded and
$c_3$-Lipschitz, so we will only need to check that $\sup_{x\in\xset}
|\tilde{g}(x)| \leq c_1$.  The maximum magnitude values of $\tilde{g}$ occur
either at the interval endpoints, which are $c_1$-bounded by
(\ref{eqn:qcqp_buffer_constraints}), or at critical points $x$ satisfying
$\tilde{g}'(x) = 0$, so it suffices to ensure that $\tilde{g}$ is $c_1$-bounded at all critical points.

We will use the $c_2$-bounded, $c_3$-Lipschitz functions $B$ and $U$ as building blocks for our extension,
since they satisfy $B(t)=U(t)=g'(t)$ for $t\in\{x_{(i)}, x_{(i+1)}\}$ and $B(t)\leq g'(t) \leq U(t)$, 
\baligns
B(t) &= \max(-c_2, g'(x_{(i)}) - c_3(t - x_{(i)}), g'(x_{(i+1)}) - c_3(x_{(i+1)} - t)), \qtext{and} \\
U(t) &= \min(c_2, g'(x_{(i)}) + c_3(t - x_{(i)}), g'(x_{(i+1)}) + c_3(x_{(i+1)} - t)),
\ealigns
for $t\in[x_{(i)}, x_{(i+1)}]$.
We need only consider three cases.
\paragraph{Case 1: $B$ and $U$ are never negative or never positive on $[x_{(i)}, x_{(i+1)}]$.}
For this case, we will choose $m_i = B$ and $M_i = U$.
By (\ref{eqn:qcqp_lb}) and (\ref{eqn:qcqp_lu}) we know $\int_{x_{(i)}}^{x_{(i+1)}}
m_i(t)d t \le g(x_{(i+1)})-g(x_{(i)}) \le \int_{x_{(i)}}^{x_{(i+1)}}
M_i(t)d t$. Since $B$ and $U$ never change signs, $\tilde{g}$ will be
monotonic and hence $c_1$-bounded for any choice of $\zeta_i$.
\paragraph{Case 2: Exactly one of $B$ and $U$ changes sign on $[x_{(i)}, x_{(i+1)}]$.}
Without loss of generality, we may assume that $g'(x_{(i)}), g'(x_{(i+1)}) \ge 0$
and that $B$ changes sign.  Consider the quantity $\phi \defeq
\int_{x_{(i)}}^{x_{(i+1)}} \max\{B(t), 0\} d t$.  If $g(x_{(i+1)}) - g(x_{(i)}) \le
\phi$, we let $m_i = B$ and $M_i = \max\{B, 0\}$.

Since, on the interval $[x_{(i)}, x_{(i+1)}]$, $B$ is piecewise linear with at most two pieces that can take on the value $0$, $B$ has at most two
roots within this interval. However, since $B(x)$ is continuous, negative
for some value of $x$, and nonnegative at $x\in\{x_{(i)},x_{(i+1)}\}$, we know $B$
has at least two roots. Thus let $r_1 < r_2$ be the roots of $B(x)$. For any
choice of $\zeta_i$, the convex combination $\zeta_i m_i + (1-\zeta_i) M_i$
will be exactly $B$ outside $(r_1, r_2)$.  Moreover, if $\zeta_i \neq 0$,
then this combination will be less than $0$ on $(r_1, r_2)$, and if $\zeta_i
= 0$, the combination will be $0$ on the whole interval. Hence it suffices
to only check the critical points $r_1$ and $r_2$.  By
(\ref{eqn:qcqp_buffer_constraints}), $m_i(r) = M_i(r) = B(r) \in
[-c_1,c_1]$ for $r\in\{r_1, r_2\}$, and so $\tilde{g}$ will be
$c_1$-bounded.

If instead $g(x_{(i+1)}) - g(x_{(i)}) > \phi$, we can recycle the argument from Case 1
with $m_i = \max\{B, 0\}$ and $M_i = U$ and conclude that $\tilde{g}$ is
$c_1$-bounded.
\paragraph{Case 3: Both $B$ and $U$ change sign on $[x_{(i)}, x_{(i+1)}]$.}
Without loss of generality, we may assume that $g'(x_{(i)}) \ge 0, g'(x_{(i+1)}) <
0$.  Since $B$ continuously interpolates between $g'(x_{(i)})$ and $g'(x_{(i+1)})$
on $[x_{(i)}, x_{(i+1)}]$, it must have a root $r$. Let $w_i\in [x_{(i)}, x_{(i+1)}]$ be
the point where $B$ changes from one linear portion to another. Then because
$B$ is monotonic on each linear portion, the fact that $B(w_i) \le
B(x_{(i+1)}) < 0$ means that $B$ cannot have a root between $[w_i, x_{(i+1)}]$
and hence has at most one root on $[x_{(i)}, x_{(i+1)}]$. Hence $r$ is the unique
root of $B$.

In a similar fashion, let us define $s$ as the root of $U$, and since $B(x)
\le U(x)$ for all $x$, we have $s \ge r$. Define
\begin{equation*}
W(x) \defeq
\begin{cases}
B(x) & x\in [x_{(i)}, r) \\
0 & x\in [r, s] \\
U(x) & t\in (s, y],
\end{cases}
\end{equation*}
and $\psi \defeq \int_{x_{(i)}}^{x_{(i+1)}} W(t) d t$. As in Case 2, we will
consider two subcases. If $g(x_{(i+1)}) - g(x_{(i)})\le \psi$, we
will let $m_i = B$ and $M_i = W$. By
(\ref{eqn:qcqp_buffer_constraints}), $m_i(r) = M_i(r) = B(r) \in [-c_1,c_1]$, and since this is the only
critical point, $\tilde{g}$ will be $c_1$-bounded.

For the other case, in which $g(x_{(i+1)}) - g(x_{(i)}) > \psi$, we choose
$m_i=W$ and $M_i=U$. Then (\ref{eqn:qcqp_buffer_constraints}) imply that
$m_i(s) = M_i(s) = U(s) \in [-c_1, c_1]$, and, since this is
the only critical point, the extension is well-defined on $(x_{(i)}, x_{(i+1)})$.
\paragraph{Defining $\tilde{g}$ outside of the interval $[x_1,x_{n'}]$}
It only remains to define our extension $\tilde{g}$ outside of the interval $[x_1, x_{n'}]$ when
either $\alpha$ or $\beta$ is infinite.
Suppose $\alpha = -\infty$.
We extend $\tilde{g}$ to each $x\in(-\infty, x_1)$ using the construction
\baligns
\tilde{g}(x) \defeq \int_{-\infty}^x \indic{t \in (x_1 - |g'(x_1)|/c_3, x_1)}
(g'(x_1) - c_3\sign(g'(x_1)) t)\ d t.
\ealigns
This extension ensures that $\tilde{g}'$ is $c_2$-bounded and $c_3$-Lipschitz.
Moreover, the constraint (\ref{eqn:qcqp_buffer_constraints})
guarantees that $|\tilde{g}(x)| \le c_1$. 
Analogous reasoning establishes an extension to $(x_{n'}, \infty)$.
\end{proof}
\section{Equivalence of Constrained Classical and Spanner Stein Discrepancies}
For $P$ with support $\xset = (\alpha_1, \beta_1)\times\cdots\times (\alpha_d, \beta_d)$ for $-\infty \le \alpha_j < \beta_j \le \infty$,
\algref{multivariate} computes a Stein discrepancy based on the graph Stein set 
\baligns
	&\gsteinset{1}{(V,E)} 
		\defeq \bigg\{ g :\xset\to\reals^d \mid 
			\forall\, x\in V,\ j,k\in\{1,\dots, d\} \text{ with } k\neq j, \text{ and } b_j \in \{\alpha_j,\beta_j\}\cap\reals, \\
			&\maxarg{
				\infnorm{g(x)}, \infnorm{\grad g(x)},
				\textstyle\frac{|g_j(x)|}{|x_{j} - b_j|},
				\textstyle\frac{|\grad_k g_j(x)|}{|x_j - b_j|},\textstyle\frac{|g_j(x) - {\grad_j g_j(x)}{(x_j - b_j)}|}{\frac{1}{2}(x_j-b_j)^2}
				} \leq 1,
			\text{  and, } \forall\, (x, y) \in E, \\
			&\maxarg{\textstyle\frac{\infnorm{g(x) - g(y)}}{\onenorm{x - y}},
				\textstyle\frac{\infnorm{\grad g(x) - \grad g(y)}}{\onenorm{x - y}},\textstyle\frac{\infnorm{g(x) - g(y) - {\grad g(x)}{(x - y)}}}{\frac{1}{2}\onenorm{x - y}^2},
				\textstyle\frac{\infnorm{g(x) - g(y) - {\grad g(y)}{(x - y)}}}{\frac{1}{2}\onenorm{x - y}^2}} \leq 1\bigg\},
\ealigns
Our next result shows that the graph Stein discrepancy based on a $t$-spanner is strongly equivalent to the classical Stein discrepancy.
\begin{proposition}[Equivalence of Constrained Classical and Spanner Stein Discrepancies]\label{prop:boundary-constraints}
If $\xset = (\alpha_1, \beta_1)\times\cdots\times (\alpha_d, \beta_d)$, and
$G_{t} = (\supp{Q}, E)$ is a $t$-spanner, then
\baligns
\langstein{Q}{\steinsetarg{1}} \leq \langstein{Q}{\gsteinset{1}{G_{t}}} \leq t^2\kappa_d\,\langstein{Q}{\steinsetarg{1}},
\ealigns
where $\kappa_d$ is a constant, independent of $(Q,P,G_{t},t)$, depending only on the dimension $d$.
\end{proposition}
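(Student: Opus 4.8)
The plan is to prove the two inequalities separately, reusing the machinery already developed for the unconstrained case. The left inequality will follow from the set inclusion $\steinsetarg{1} \subseteq \gsteinset{1}{G_t}$, so that a supremum over the larger collection can only be larger. The right inequality will follow from an extension statement: every $g \in \gsteinset{1}{G_t}$ admits an extension $\tilde g \in t^2\kappa_d\,\steinsetarg{1}$ agreeing with $g$ in value and gradient at every point of $\supp{Q}$. Since the Stein objective is linear in $g$ and depends on $g$ only through these values, such an extension bounds $\langstein{Q}{\gsteinset{1}{G_t}}$ by $t^2\kappa_d\,\langstein{Q}{\steinsetarg{1}}$. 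The construction of $\tilde g$ factors through two reductions, spanner-to-complete-graph and complete-graph-to-classical, exactly as in \propref{spanner-equivalence} and \propref{complete-graph-equivalence}, the new ingredient being the enforcement of the Neumann boundary condition.

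For the left inequality I would fix $g \in \steinsetarg{1}$ and verify each family of constraints defining $\gsteinset{1}{G_t}$. The vertex bounds $\infnorm{g(x)},\infnorm{\grad g(x)} \le 1$ are immediate, since the $\ell_1$ dual norm is $\infnorm{\cdot}$. The pairwise difference and Taylor constraints follow from boundedness and the Lipschitz-gradient constraint: writing $g(y) - g(x) - \grad g(x)(y-x) = \int_0^1 [\grad g(x+s(y-x)) - \grad g(x)](y-x)\,ds$ and bounding the integrand by $s\onenorm{y-x}^2$ yields the required $\tfrac12\onenorm{x-y}^2$ remainder, with analogous estimates for the other pairwise constraints. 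For the boundary constraints I would compare $g(x)$ with its value at the foot $x'$ of the perpendicular from $x$ to the face $\{z_j = b_j\}$, which lies in the relative interior of that face; the Neumann condition forces $g_j(x') = 0$ there, and since $g_j$ vanishes throughout that relative interior its tangential derivatives $\grad_k g_j(x')$ vanish as well. Boundedness and the Lipschitz-gradient constraint then convert these into $|g_j(x)| \le |x_j - b_j|$, $|\grad_k g_j(x)| \le |x_j - b_j|$, and the second-order Taylor bound.

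For the right inequality I would proceed in two stages. First, the telescoping argument of \propref{spanner-equivalence}, applied verbatim along spanner paths, shows that any $g \in \gsteinset{1}{G_t}$ satisfies all complete-graph pairwise constraints after inflation by $t^2$ (first-order differences inflate by $t \le t^2$, the Taylor residuals by $t^2$). Since the per-vertex and boundary constraints are untouched and $t\ge 1$, the rescaled $t^{-2}g$ lies in $\gsteinset{1}{G_1}$ for $G_1$ the complete graph on $\supp{Q}$, giving $\langstein{Q}{\gsteinset{1}{G_t}} \le t^2\,\langstein{Q}{\gsteinset{1}{G_1}}$. Second, I reduce $\gsteinset{1}{G_1}$ to the classical constrained set. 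In the absence of a boundary this is exactly \propref{complete-graph-equivalence}: the Whitney--Glaeser extension produces a $C^{1,1}$ function on $\reals^d$ matching the prescribed value and gradient jets at all sample points with Stein-set norm inflated by a dimensional constant $\kappa_d$. The remaining task is to make this extension respect $\inner{\tilde g(z)}{n(z)} = 0$ on $\partial\xset$.

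The main obstacle is therefore this boundary-respecting extension, and the per-vertex boundary constraints are precisely what make it possible. Because $\xset$ is a box, I would exploit its product structure and extend by reflection: across each finite face $\{z_j = b_j\}$ reflect the normal component $\tilde g_j$ oddly and each tangential component $\tilde g_k$ ($k\neq j$) evenly, which forces $\tilde g_j = 0$ on that face while leaving half-spaces untouched wherever a bound is infinite. The point is that the boundary constraints—bounding $|g_j|$, the tangential $|\grad_k g_j|$, and the normal Taylor residual $|g_j - \grad_j g_j\,(x_j - b_j)|$ relative to the distance to the face—are exactly the first- and second-order compatibility conditions needed for the reflected jet data to remain consistent with a $C^{1,1}$ function across each face; without them the reflected gradient would jump. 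Applying the unconstrained Whitney--Glaeser extension of \propref{complete-graph-equivalence} to the reflected, augmented jet data (at most a factor $2^d$ more points) and restricting back to $\xset$ then yields a $\tilde g \in \kappa_d\,\steinsetarg{1}$ after relabeling constants, whose symmetry guarantees the Neumann condition; composing with the first stage gives the advertised $t^2\kappa_d$ factor. The one calculation that requires care is checking that the reflected pairwise jets still obey the complete-graph constraints, so that Whitney--Glaeser applies on the augmented set, but this reduces to combining the original pairwise constraints with the boundary constraints through the same triangle-inequality bookkeeping used above; a local additive correction supported near each face is an equally viable alternative to reflection.
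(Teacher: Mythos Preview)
Your argument for the first inequality and your spanner-to-complete-graph reduction for the second both match the paper. The divergence is in how you enforce the Neumann condition when passing from complete-graph jet data to a genuine element of $\steinsetarg{1}$.

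The paper does not reflect. For each coordinate $j$ it prescribes jet data directly on the \emph{entire} $j$-th boundary face set $B_j=\{z:z_j\in\{\alpha_j,\beta_j\}\cap\reals\}$, setting $g_j(b)=0$, $\grad_k g_j(b)=0$ for $k\neq j$, and $\grad_j g_j(b)\defeq\min_{z\in\supp{Q}}\{\grad_j g_j(z)+t\onenorm{z-b}\}$. It then verifies the Whitney--Glaeser compatibility inequalities between $\supp{Q}$ and $B_j$ and within $B_j$ (this is exactly where the graph boundary constraints \eqref{eqn:graph-boundary-constraints} are consumed) and applies Whitney--Glaeser to the infinite set $\supp{Q}\cup B_j$. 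Because the resulting $C^{1,1}$ extension matches the prescribed zero value on all of $B_j$, the Neumann condition $\inner{g^*(z)}{n(z)}=0$ on $\partial\xset$ follows immediately.

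Your reflection route has a real gap when a coordinate has both endpoints finite. The two reflections across the parallel faces $\{z_j=\alpha_j\}$ and $\{z_j=\beta_j\}$ generate an infinite dihedral group, so there is no finite symmetrization: antisymmetrizing the Whitney--Glaeser output with respect to one face destroys the odd structure with respect to the other, and your ``at most $2^d$ more points'' does not account for this. Even for a single face, Whitney--Glaeser applied to odd-symmetric data at finitely many points need not return an odd function; you would have to symmetrize the output explicitly, a step you gloss over with ``whose symmetry guarantees the Neumann condition.'' The local-correction alternative you mention in your last sentence could be made to work, but carrying it out requires essentially the same boundary-jet bookkeeping the paper performs; the paper's direct prescription of jets on $B_j$ is the cleaner way to close this gap.
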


\begin{proof}
\paragraph{Establishing the first inequality}
Fix any $g \in \steinsetarg{1}$, $z \in \supp{Q}$, and $j,k \in \{1,\dots,d\}$ with $k\neq j$, and consider any 
$j$-th coordinate boundary projection point
\[
b \in \{z + e_j(\alpha_j-z_j), z + e_j(\beta_j-z_j)\} \cap \reals^d.
\]
Since $b \in \partial\xset$, we must have $\inner{g(b)}{n(b)} = \inner{g(b)}{e_j} = g_j(b) = 0$.
Moreover, for each dimension $k\neq j$, we have $\grad_k g_j(x) = 0$, since otherwise, $\inner{g(b+ \delta e_k)}{n(b+ \delta e_k)}=g_j(b + \delta e_k) \neq 0$ for some $\delta\in\reals$ and $b + \delta e_k \in \partial \xset$ by the continuity of $\grad g_j$.

The smoothness constraints of the classical Stein set $\steinsetarg{1}$ now imply that
\baligns
{|g_j(z)|}
	={|g_j(z) - g_j(b)|} 
	\leq {|z_j - b_j|},\quad
{|\grad_k g_j(x)|}
	={|\grad_k g_j(z)-\grad_k g_j(b)|}
	\leq {|z_j - b_j|},
\ealigns
and 
\[
|g_j(z) - \grad_j g_j(x) (z_j-b_j)|
	= |g_j(b) - g_j(z) - \inner{\grad g_j(z)}{b-z}| 
	\leq \frac{1}{2} (z_j-b_j)^2
\]
so that all graph Stein set boundary compatibility constraints are satisfied.
Hence, we have the containment $\steinsetarg{1} \subseteq \gsteinset{1}{G_{t}}$, which implies the first advertised inequality.

\paragraph{Establishing the second inequality}
To establish the second inequality, it suffices to show that for any $\tilde{g} \in \gsteinset{1}{G_{t}}$, each $j \in \{1,\dots,d\}$,
and $\zeta \defeq t$,
there exists a function $g_j$ satisfying 
\balign
&g_j(z) = \tilde{g}_j(z),\ 
\grad g_j(z) = \grad \tilde{g}_j(z),\ 
g_j(b) = 0,\ 
\grad_k g_j(b) = 0, \;\forall k\neq j, \label{eqn:multi_boundary_C1} \\
&|g_j(b) - g_j(z)| \le \onenorm{b-z}, \label{eqn:multi_boundary_C2} \\
&\infnorm{\grad g_j(b) - \grad g_j(z)} \le \zeta \onenorm{b-z},\  
\infnorm{\grad g_j(b) - \grad g_j(b')} \le \zeta \onenorm{b-b'}, \label{eqn:multi_boundary_C4} \\
&|g_j(b) - g_j(z) - \inner{\grad g_j(z)}{b-z}| \le \frac{\zeta}{2} \onenorm{b-z}^2, \label{eqn:multi_boundary_C5} \\
&|g_j(z) - g_j(b) - \inner{\grad g_j(b)}{z-b}| \le \frac{3\zeta}{2} \onenorm{b-z}^2, \qtext{and} \label{eqn:multi_boundary_C6} \\
&|g_j(b) - g_j(b') - \inner{\grad g_j(b')}{b-b'}| \le \frac{\zeta}{2} \onenorm{b-b'}^2\label{eqn:multi_boundary_C7} 
\ealign
for all $z\in\supp{Q}$ and all $b, b'$ in the $j$-th coordinate boundary set
\[
B_j \defeq \{b \in\reals^d : b = z + e_j(\alpha_j-z_{j}) \text{ or } b = z + e_j(\beta_j-z_{j}) \text{ for some } z \in\xset\}.
\]
Indeed, since such $g_j$ will satisfy $\maxarg{|{g}_j(z)|, \infnorm{\grad {g}_j(z)}}\leq 1$ for all $z\in\supp{Q}\cup B_j$ and 
\[
\maxarg{\textstyle\frac{|g_j(x) - g_j(y)|}{\onenorm{x - y}},
				\textstyle\frac{\infnorm{\grad g_j(x) - \grad g_j(y)}}{\onenorm{x - y}},\textstyle\frac{|g_j(x) - g_j(y) - {\grad g_j(x)}{(x - y)}|}{\frac{1}{2}\onenorm{x - y}^2},
				\textstyle\frac{|g_j(x) - g_j(y) - {\grad g_j(y)}{(x - y)}|}{\frac{1}{2}\onenorm{x - y}^2}} \leq 2t^2
\]
for all ${x,y\in\supp{Q}}$
by the argument of \appref{spanner-equivalence-proof},
the Whitney-Glaeser extension theorem~\cite[Thm. 1.4]{Shvartsman08} of
\citet{Glaeser58} will then imply that there exists $g^*\in t^2\kappa_d\,\steinsetarg{1}$,
for a constant $\kappa_d$ independent of $\tilde{g}$ depending only on $d$, with $g^*(z) = g(z)$ and $\grad g^*(z) = \grad g(z)$ for all $z\in\supp{Q}$.
Since $\tilde{g}$ and $g^*$ will have matching Stein discrepancy objective values, and each objective is linear in $g$, the second advertised inequality will then follow.

Fix $\tilde{g} \in \gsteinset{1}{G_{t}}$ and $j \in \{1,\dots,d\}$.
We will now construct a function $g_j$ satisfying the desired properties.
Since
$g_j$ and $\grad g_j$ are determined on $\supp{Q}$, and $g_j$ and $\grad_kg_j$ are determined on $B_j$
for $k\neq j$ by the constraints \eqnref{multi_boundary_C1}, it remains to define $\grad_j g_j$ on $B_j$.
We choose the extension
\baligns
\grad_j g_j(b) &\defeq \min_{z\in\supp{Q}} \left \{\grad_j g_j(z) + \zeta\onenorm{z - b}\right \} \qtext{for all} b \in B_j.
\ealigns
Fix any $z\in\supp{Q}$ and $b \in B_j$, and let $b^* = z + e_j(b_j - z_{j})$.
The argument of \appref{spanner-equivalence-proof} implies that $\grad_j g_j$ is $\zeta$-Lipschitz on $\supp{Q}$, 
and hence it is also $\zeta$-Lipschitz on $\supp{Q}\cup B_{j}$.
Since 
\[
|\grad_k g_j(z) - \grad_k g_j(b)| = |\grad_k g_j(z)|\leq |z_j - b_j| \leq \onenorm{z-b}
\] for all $k\neq j$, we have (\ref{eqn:multi_boundary_C4}).
Moreover, the boundary compatibility constraints of $\gsteinset{1}{G_{t}}$ imply 
\[
|g_j(b) - g_j(z)| = |g_j(z)| \leq \onenorm{b^* - z} \leq \onenorm{b-z},
\]
establishing \eqnref{multi_boundary_C2}.
We next invoke the triangle inequality, the boundary compatibility conditions of $\gsteinset{1}{G_{t}}$, \Holder's inequality, 
the Lipschitz derivative property \eqnref{multi_boundary_C4}, and the fact $\onenorm{z-b} = \onenorm{b^*-z}+\onenorm{b^*-b}$ in turn to establish 
\eqnref{multi_boundary_C5}:
\baligns
|g_j(b) - g_j(z) - \inner{\grad g_j(z)}{b-z}|
	&= |g_j(z) - \grad_j g_j(z)(z_j-b_j) - \inner{\grad g_j(z)}{b^*-b}| \\
	&\leq |g_j(z) - \grad_j g_j(z)(z_j-b_j)| + |\inner{\grad g_j(b^*) - \grad g_j(z)}{b^*-b}| \\
	&\leq \frac{1}{2}\onenorm{b^*-z}^2 + \infnorm{\grad g_j(b^*) - \grad g_j(z)}\onenorm{b^*-b} \\
	&\leq \frac{1}{2}\onenorm{b^*-z}^2 + \zeta\onenorm{b^*-z}\onenorm{b^*-b}\\
	&\leq \frac{\zeta}{2}(\onenorm{b^*-z}+\onenorm{b^*-b})^2 
	= \frac{\zeta}{2}\onenorm{b-z}^2.
\ealigns
A parallel argument yields \eqnref{multi_boundary_C7}.
Finally, we may deduce \eqnref{multi_boundary_C6}, as
\baligns
|g_j(z) - g_j(b) - \inner{\grad g_j(b)}{z-b}| 
	&\leq |g_j(z) - \grad_j g_j(z)(z_j-b_j)| + |\grad_j g_j(b) - \grad_j g_j(z)||z_j-b_j| \\
	&\leq \frac{1}{2} (z_j-b_j)^2 + \zeta\onenorm{b-z}|z_j-b_j| 
	\leq \frac{3\zeta}{2}\onenorm{b-z}^2
\ealigns
by the triangle inequality, the definition of $\gsteinset{1}{G_{t}}$, and 
the Lipschitz property \eqnref{multi_boundary_C4}.
\end{proof}

\ifdefined\arxivmode
\subsubsection*{Acknowledgments}
The authors would like to thank Madeleine Udell for her generous advice concerning optimization in Julia, 
Quirijn Bouts and Kevin Buchin for sharing their wise counsel and greedy spanner implementation,
Francis Bach for sharing his pseudosampling code,
Andreas Eberle for his triple coupling pointers,
and 
Jessica Hwang for her feedback on various versions of this manuscript.

This work was supported by the Frederick E. Terman Fellowship and the
National Science Foundation Graduate Research Fellowship under Grant
No. DGE-114747.
{\small
\bibliography{stein}
}
\fi
\end{document}